\documentclass[]{llncs}
\usepackage{misc0}


 \usepackage{algorithm}
\usepackage{algpseudocode}
 \usepackage{mathtools}
 \usepackage{color}
 \usepackage{microtype}
\usepackage{thm-restate} 
\newcommand{\agents}{\ensuremath{\mathbf{A}}\xspace} 
\newcommand{\elaxiom}{\ensuremath{{\sf a}}\xspace} 
\newcommand{\K}{\ensuremath{\mathbf{K}}\xspace} 
 
\newcommand{\ELK} {\ensuremath{\mathcal{ELK} }\xspace} 
\newcommand{\LK} {\ensuremath{\mathcal{LK} }\xspace} 
 
\newcommand{\prop}[1]{\ensuremath{#1_{\sf prop} }\xspace} 
\newcommand{\ELrl}{\ensuremath{\EL_{\sf lhs,rhs}}\xspace}
\newcommand{\EXoracle}{\ensuremath{ {\sf EX}_{\Fmf(\EL_{{\sf lhs}, {\sf rhs}},\agents),\Omc^k}}\xspace}
\newcommand{\consistent}[1]{\ensuremath{#1\text{-consistent}}\xspace}

\begin{document}
\pagestyle{headings}  

\title{Learning Ontologies with Epistemic Reasoning: \\
The \EL Case}
\author{Ana Ozaki and  Nicolas Troquard
}
\institute{KRDB Research Centre, Free University of Bozen-Bolzano, Italy}
\maketitle  

\begin{abstract}
We investigate the problem of learning description logic ontologies \emph{from entailments} 
 via queries,  using epistemic reasoning. 
We introduce a new learning model consisting of 
 \emph{epistemic membership and example queries}   and show that 
 polynomial learnability in this  model coincides with 
polynomial learnability in  Angluin's exact learning model 
with membership and equivalence queries. 
We then instantiate our learning framework to \EL  and show some complexity 
results for an epistemic extension of \EL where   epistemic operators 
can be applied over the axioms.
Finally, we transfer 
known results for \EL ontologies and  its fragments
 to our 
learning model based on epistemic reasoning.   
\end{abstract}

\section{Introduction}

Description logics (DL) balance expressivity and complexity of reasoning, 
resulting in a family of formalisms which can   capture 
conceptual knowledge in various domains~\cite{dlhandbook}. 
One of the most popular ontology languages, featuring polynomial time 
complexity of reasoning tasks such as entailment, is \EL~\cite{BBL-EL}, which 
allows conjunctions ($\sqcap$) and existential restrictions ($\exists$) in its concept 
expressions but disallows negations of concepts. 
The following example illustrates \EL ontologies (Section~\ref{sec:reasoning}) representing 
knowledge of   experts in different domains.
\begin{example}\label{ex:1}
${\sf Ana}$ knows 
about  ${\sf Brazilian\ music} $ $({\sf BM})$ and ${\sf Nicolas}$ is an expert in 
${\sf French}$ ${\sf cuisine}$ $ ({\sf FC})$.
We can represent some   parts
of their knowledge as follows. 
 \[
\begin{array}{ll@{\hspace*{1.5em}}ll}
  &\Omc^{\sf BM}_{\sf Ana}= \{{\sf BrazilianSinger(Caetano)} &\Omc^{\sf FC}_{\sf Nicolas}=\{{\sf FrenchChef(Soyer)}&\\  
  
  &{\sf BossaNova} \sqsubseteq {\sf BrazilianMusicStyle} 
  & 
   
      {\sf Crepe} \sqsubseteq \exists {\sf contains. Flour} &\\
  &{\sf ViolaBuriti} \sqsubseteq \exists {\sf madeFrom. Buriti}\}
    &   {\sf Crepe} \sqcap \exists {\sf contains. Sugar} \sqsubseteq {\sf Dessert}\}&
 \end{array}
\]
\end{example}
 
Naturally, domain experts---humans, or artificial entities with complex neural 
networks---cannot be expected to  be able to   easily transfer their knowledge. 
However, when specific questions 
about the domain are posed, e.g., `is Bossa Nova a Brazilian music style?', 
an expert in the domain of Brazilian music can  accurately decide whether such statement holds or not. 
So the ontology representation of the  knowledge of 
an expert, even though \emph{not directly accessible}, 
  can be learned via a trial and error process in which
  individuals or machines, generically   called   \emph{agents}, 
  communicate with each other,    in order to learn from the other agents. 
  We assume that the \emph{target} domain of interest to be learned
  is  represented by a logical theory formulated in an ontology language.
  
  %



%
In computational learning theory, 
a classical communication protocol coming from the exact learning model~\cite{angluinqueries} 
is based on questions of two types: \emph{membership} and \emph{equivalence} queries. 
In a learning from entailments setting~\cite{DBLP:conf/icml/FrazierP93}, 
these questions can be described as follows. 
Membership queries correspond to asking whether a certain statement formulated as a logical 
sentence follows from the target. 
Equivalence queries correspond to asking whether a 
certain logical theory, called \emph{hypothesis},  
precisely describes the target.
If  there are  wrong or missing statements in the hypothesis, 
a statement illustrating the imprecision should be returned 
to the agent playing the role of the \emph{learner}. 
\begin{example}\label{ex:2} 
Assume ${\sf Ana}$ wants to learn about ${\sf French}$ ${\sf cuisine}$.
She asks ${\sf Nicolas}$ whether it follows from 
his knowledge that `every crepe is a dessert', in symbols, 
`does $\Omc^{\sf FC}_{\sf Nicolas}\models  {\sf Crepe}   \sqsubseteq {\sf Dessert}$?',  
which the answer in this case is `no', since only those which contain sugar are 
considered desserts. 
To receive new statements about ${\sf French}$ ${\sf cuisine}$ from ${\sf Nicolas}$, 
${\sf Ana}$ needs to pose equivalence queries, in symbols `does 
$\Omc^{\sf FC}_{\sf Ana}\equiv\Omc^{\sf FC}_{\sf Nicolas}$?'. Each time she 
poses this type of questions, her best interest is to tell him everything she knows about 
 ${\sf French}$ ${\sf cuisine}$.
\end{example}
One of the main difficulties in implementing   
this protocol in practice~\cite[page 297]{Mohri:2012:FML:2371238} comes from
the putative unreasonableness of equivalence queries.
Whenever a learner poses an equivalence query, the expert playing 
the role of an \emph{oracle} needs to 
evaluate the whole hypothesis and decide whether or not it is  equivalent
to the target. 
If not, then the oracle 
returns
a statement in the logical difference between 
the hypothesis and the target.
One way out of this difficulty is hinted to us by a simple observation:
during interactive communication among agents, 
not only domain knowledge is exchanged and acquired
but also second-order knowledge, which is the knowledge of what is known by the other agents.

\begin{example} 
 When ${\sf Ana}$ and ${\sf Nicolas}$ 
 communicate, they know what they have already told to each other.  
 If ${\sf Ana}$  tells ${\sf Nicolas}$ that 
 `Buriti is a Brazilian tree'
 (${\sf Nicolas}$ now \emph{knows} this statement, in symbols, $\K_{\sf Nicolas}({\sf Buriti}\sqsubseteq {\sf BrazilianTree})$)
 and that `Viola de Buriti is made from Buriti' 
 ($\K_{\sf Nicolas}({\sf ViolaBuriti}\sqsubseteq \linebreak \exists {\sf madeFrom}.{\sf Buriti})$)
 she does not need to 
 tell him that `Viola de Buriti is made from a Brazilian tree'
 (as it follows that 
 $\K_{\sf Nicolas}({\sf ViolaBuriti}\sqsubseteq \linebreak \exists {\sf madeFrom}.{\sf BrazilianTree})$, 
 see Section~\ref{sec:reasoning}).
\end{example}

In this paper, we thus propose a new and more realistic learning model.
It is based on a protocol which takes into account what is known by the agents, either 
because a statement was explicitly communicated or because it is a logical consequence of 
previous statements given during their interaction.
Our protocol is based on queries of two types. 
The first is an epistemic version of membership queries where 
the oracle `remembers' those membership queries whose reply 
was `yes'.
We call the second type \emph{example queries}. When asked an example query,
the oracle answers a statement which follows from its knowledge but does not follow 
from its knowledge about what the learner knows. The oracle also `remembers' that 
the statements given are now known by the learner. 

The first contribution of this work is the introduction of the learning 
model based on epistemic reasoning, which we call \emph{epistemic learning model}, and
an analysis of its `power' in comparison with the exact learning model (Figure~\ref{fig:models}).
The second   is an instantiation to the \EL ontology language, whose polynomial learnability 
has 
been investigated in the exact learning model~\cite{KLOW18,DBLP:conf/kr/DuarteKO18,DBLP:conf/aaai/KonevOW16}. 
%

In more details, the epistemic learning model is introduced in Section~\ref{sec:learning-epistemic}.
We then establish in Section~\ref{sec:epiVSexac} that polynomial learnability is strictly harder  in the 
epistemic model without (an epistemic version of) membership queries (Theorems~\ref{thm:onedirection} and~\ref{thm:harder}).
Nonetheless, it coincides  with
 polynomial learnability in the exact learning model if both 
 types of queries are allowed (Theorem~\ref{thm:transfer}). 
Since it is known that polynomial learnability 
in the exact learning model with only equivalence queries implies polynomial learnability 
in the classical probably approximately correct learning model (PAC)~\cite{angluinqueries,Valiant}, 
it follows that polynomial learnability in the epistemic learning model 
with only example queries implies polynomial learnability 
in the PAC learning model.
The same relationship holds for the case where we have (an epistemic version of) membership queries in the epistemic model and 
the PAC model also allows
  membership queries. 
\begin{figure}[t]
  \centering
  \[
  \begin{array}{lcccr}
    \text{{\scriptsize EPISTEMIC[{\sf MEM},{\sf EX}]}~~~~~} & \stackrel{\text{Th.}~\ref{thm:transfer}}{\large{=}} & \text{~~~~~{\scriptsize EXACT[{\sf MEM},{\sf EQ}]}~~~~~} & \stackrel{\text{\cite{Mohri:2012:FML:2371238} and \cite{Blum:1994:SDM:196751.196815}}}{\large{\subset}} & \text{~~~~~{\scriptsize PAC[{\sf MEM}]}}\\
    \\
    \text{{\scriptsize EPISTEMIC[{\sf EX}]}} & \stackrel{\text{Th.~\ref{thm:onedirection} and \text{Th.}~\ref{thm:harder}}}{\large{\subset}} & \text{{\scriptsize EXACT[{\sf EQ}]}} & \stackrel{\text{\cite{angluinqueries} and \cite{Blum:1994:SDM:196751.196815}}}{\large{\subset}} & \text{{\scriptsize PAC}}\\
  \end{array}
  \]
  
  \caption{Polynomial learnability. Each class denotes the set of frameworks that are polynomial query learnable in the corresponding learning model. {\sf MEM}, {\sf EQ} and {\sf EX} stand for membership, equivalence, and example queries respectively.
  }
\label{fig:models}
\end{figure}

We also show in Section~\ref{sec:reasoning} some complexity results for an epistemic extension of \EL, 
which we call \ELK. 
In particular, we show that satisfiability in \ELK, which 
includes Boolean combinations of \EL axioms, does increase 
the \NP-completeness of propositional logic (Theorem~\ref{th:elk}).
We then show that 
a fragment of \ELK 
features  
\PTime 
complexity for the satisfiability and entailment problems
 (Theorem~\ref{th:conj-elk}), as in \EL~\cite{BBL-EL}. Crucially, it captures 
the epistemic reasoning that
  the agent playing the role of the oracle needs to perform.
Finally, in Section~\ref{sec:learningEL} we transfer known results~\cite{KLOW18,DBLP:conf/kr/DuarteKO18} for  \EL  
in the exact learning model to the epistemic learning model.  
  





\section{Learning with epistemic reasoning}\label{sec:learning-epistemic}

We first define the epistemic extension of a   
 description logic \Lmc,  which is often a notation variant of a fragment of  
   first-order logic or propositional logic.  
The epistemic extension of   \Lmc allows 
expressions of the form
 `agent $i$ knows some axiom of \Lmc'. 
 We then use the epistemic extension of a logic to define 
 a learning framework based on epistemic reasoning. 
\subsection{The epistemic extension of \Lmc}\label{subsec:extension}
 
In the following, we   formalise the epistemic extension \LK  of   a description logic  \Lmc.  
Our notation and definitions can be easily adapted to the case 
\Lmc is a (fragment of) first-order or propositional logic.  
Assume symbols of \Lmc are taken from pairwise disjoint and countably infinite sets of concept, role  
and individual names  
\NC, \NR and \NI, respectively. 
Let $\agents$ be a set of agents. 
An \LK  axiom is an expression of the form:
$
\beta ::= \alpha \mid  \K_i \beta   
$ where $\alpha$ is an \Lmc formula and $i\in\agents$. 
\LK  formulas $\varphi,\psi$ are expressions of the form: 
$
 \varphi ::=  \beta\mid   \neg\varphi\mid    \varphi\wedge \psi 
$
where $\beta$ is an \LK  axiom.


An \Lmc  \emph{interpretation} $\Imc=(\Delta^\Imc,\cdot^{\Imc})$ over a non-empty 
set $\Delta^\Imc$, called the \emph{domain}, defines an \emph{interpretation 
function}~$\cdot^{\Imc}$ that maps each 
concept name $A  \in \NC $ to a subset~$A^{\Imc} $ of~$\Delta^\Imc$, each role
name $r  \in \NR $ to a binary relation~$r^\Imc $ on~$\Delta^\Imc$, and each 
individual name $a\in\NI$ to an element $a^\Imc\in \Delta^\Imc$. 
The extension of the mapping~$\cdot^{\Imc}$ from concept names to \Lmc  
complex concept expressions depends on the precise definition of \Lmc.
We write $\models_{\Lmc}$ and $\equiv_\Lmc$ 
to denote the entailment and equivalence relations for \Lmc formulas, respectively. 
    
An \LK interpretation $\Imf=(\Wmc,\{\Rmc_i\}_{i\in \agents})$ consists of a set 
$\Wmc$ of \Lmc interpretations 
and a set of accessibility relations $\Rmc_i$ on $\Wmc$, 
one for each agent $i\in \agents$. We assume that the relations $\Rmc_i$ are 
equivalence relations. 
A pointed \LK interpretation is a pair $(\Imf,\Imc)$ where 
$\Imf=(\Wmc,\{\Rmc_i\}_{i\in \agents})$ is an \LK interpretation and 
$\Imc$ is an element of \Wmc. 
The \emph{entailment} relation $\models_\LK$ of an \LK formula~$\varphi$ 
in~$\Imf=(\Wmc,\{\Rmc_i\}_{i\in \agents})$  pointed at   $\Imc\in\Wmc$
 is inductively defined (for simplicity, we may omit the subscript $_\LK$ from  $\models_\LK$):
\[
\begin{array}{ll@{\hspace*{1.5em}}ll}
    
  \Imf,\Imc\models \alpha & \text{iff }  \Imc\models_{\Lmc} \alpha 
    & \Imf,\Imc\models \phi \wedge \psi & \text{iff } \Imf,\Imc\models \phi
     \text{ and }\Imf,\Imc\models \psi \\
  \Imf,\Imc\models \neg \phi & \text{iff not } \Imf,\Imc\models \phi
    &  \Imf,\Imc\models \K_i \beta & \text{iff } \forall(\Imc,\Jmc) \in \Rmc_i\text{, } \Jmc\models\beta.
 \end{array}
\]
 
An \LK  formula $\varphi$ \emph{entails} an \LK  formula $\psi$, written
  $\varphi\models\psi$,  iff for all pointed \LK interpretations $(\Imf,\Imc)$, 
$\Imf,\Imc\models\varphi$ implies $\Imf,\Imc\models \psi$.
An \LK  formula $\varphi$ is \emph{equivalent} to an \LK  formula $\psi$, written
  $\varphi\equiv\psi$ (we may omit $_\LK$ from $\equiv_\LK$),  iff $\varphi\models\psi$ and $\psi\models\varphi$. 
We use the notion of a set of formulas and the conjunction of its elements 
interchangeably. 
The \emph{size} of a formula or an interpretation $X$, denoted  $| X|$, is the length of the 
string that represents it, where 
concept, role and individual names 
and 
domain elements are considered to be of length $1$.

\subsection{A learning model based on epistemic reasoning}\label{subsec:epistemic-model}

We first adapt the exact learning model 
with membership and equivalence queries to a multi-agent setting. 
We  then introduce the epistemic learning model in a multi-agent 
setting and provide complexity notions for these models.


We   introduce basic notions for the definition of a learning framework 
and the learning problem via queries~\cite{angluinqueries}, adapted   
to a \emph{learning from entailments} 
setting~\cite{DBLP:conf/icml/FrazierP93} with multiple agents.
A \emph{learning (from entailments) framework} $\Fmf$ is a pair $(X, L)$, where $X$ 
is a set of \emph{examples} (also called \emph{domain} or \emph{instance space}), and 
$L$ is a set of  \emph{formulas} of a  description logic \Lmc.
We say that $x\in X$ is a \emph{positive example} for $l \in L$ if $l\models_\Lmc x $ and a
\emph{negative example} for $l$ if $l\not\models_\Lmc x$. 
A \emph{ counterexample} $x$ for $l\in L$ and $h\in L$ is either
a \emph{positive example} for $l$ such that $h\not\models_\Lmc x $
or
a \emph{negative example} for $l$ such that $h\models_\Lmc x $. 
A \emph{multi-agent learning framework} $\Fmf(\agents)$ is a set 
$\{\Fmf_i= (X_i, L_i)\mid i\in \agents\}$ of 
learning frameworks. 

We first provide a formal definition of the exact learning model, based on 
membership and equivalence queries, and then we introduce the 
epistemic learning model, with example and epistemic membership queries. 
Let $\Fmf(\agents)$ be a multi-agent learning framework.  
Each $i\in\agents$ aims at learning  a \emph{target}   
formula $l_j\in L_j$ of a description logic \Lmc 
 of each other agent $j\neq i\in\agents$
by posing them queries. 
 
\begin{definition}[Membership query] For every $i\in\agents$ and every 
$l_i \in L_i$, let  ${\sf MEM}_{\Fmf(\agents),l_i}$ be an oracle that takes as input   $x \in X_i$ and
outputs `yes' if $l_i\models_\Lmc x $ and `no' otherwise. 
A \emph{membership query to agent $i\in\agents$} is a call to  ${\sf MEM}_{\Fmf(\agents),l_i}$.
%
\end{definition}
\begin{definition}[Equivalence query] For every $i\in\agents$ and every 
$l_i \in L_i$, we denote by ${\sf EQ}_{\Fmf(\agents),l_i}$ an oracle 
that takes as input a \emph{hypothesis} formula of a description logic  $h \in L_i$ and
returns `yes' if $h\equiv_\Lmc l_i$ and a 
counterexample for $l_i$ and $h$ otherwise. 
An \emph{equivalence query to agent $i\in\agents$} is a call to  ${\sf EQ}_{\Fmf(\agents),l_i}$.
There is no assumption  about which counterexample 
is returned by ${\sf EQ}_{\Fmf(\agents),l_i}$.
\end{definition}

In this work, we  introduce   \emph{example} queries, 
where an agent $i\in\agents$ can ask an agent $j\in\agents$ to only provide examples which are not logical 
consequences of what they have already communicated. 
Intuitively, if agent $j$ returns $x$ to agent $i$ in a language \Lmc and $x\models_\Lmc y$ 
then agent $i$ knows $y$, in symbols, $\K_i  y$. Since agent $j$ returned this example
to agent $i$, the axiom $\K_i  y$ is part of the logical theory representing 
the knowledge of agent $j$, so agent $j$ acquires knowledge of what is 
known by agent $i$ as they communicate. 
We use example queries 
in combination with an epistemic version of membership queries, 
  called  \emph{\K-membership} queries.  
Given $i\in\agents$, assume  that  $L_i$ is a set of formulas of the logic \Lmc and 
  denote by $L^\K_i$ the set of all formulas in the epistemic extension 
of \Lmc, which, by definition of \LK, includes all \Lmc formulas. 
The target formula  $l_i$ is an element of $L_i$,
however,  the oracles for the example and \K-membership queries 
may add \LK formulas 
to $l_i$. 
We denote by $l^{k+1}_i$ the result of updating $l^{k}_i$
upon receiving the $k$-th query, where  $l^1_i=l_i$.
At all times $X_i$ is a set of examples in \Lmc (not in \LK).

\begin{definition}[\K-membership query] For every $i\in\agents$ and every 
$l^k_i \in L^\K_i$, let  ${\sf MEM}^\K_{\Fmf(\agents),l^k_i}$ be an oracle 
that takes as input   $x \in X_i$ and $j\in\agents$, and, if $l^1_i\models_\Lmc x $, it
outputs `yes' and define $l^{k+1}_i:=l^{k}_i\wedge \K_j x$\footnote{
We may write $l^{k}_i$ for  the conjunction of its elements.}.
 
Otherwise it returns `no' and defines $l^{k+1}_i:=l^{k}_i$. 
The $k$-th \emph{\K-membership query to agent $i\in\agents$} is a call to 
 ${\sf MEM}^\K_{\Fmf(\agents),l^k_i}$.
%
\end{definition}
\begin{definition}[Example query] For every $i\in\agents$ and every 
$l^k_i \in L^\K_i$, let  ${\sf EX}_{\Fmf(\agents),l^k_i}$ be an oracle that 
takes as input some  $j\in\agents$
and outputs  $x \in X_i$ such that  $l^1_i\models_\Lmc x $ 
but $l^k_i\not\models_\LK\K_j x $ 
 if such $x$ exists;
or `you  finished', otherwise. 

Upon returning $x\in X_i$ such that $l^1_i\models_\Lmc x$ 
the oracle ${\sf EX}_{\Fmf(\agents),l^k_i}$  
defines $l^{k+1}_i:=l^{k}_i\wedge \K_j x$. 
The $k$-th \emph{example query to agent $i\in\agents$} 
is a call to   ${\sf EX}_{\Fmf(\agents),l^k_i}$. 
\end{definition}

An \emph{exact learning algorithm} $A_{i}$ 
for $\Fmf_i\in\Fmf(\agents)$ is a deterministic algorithm 
 that takes no input,
 is allowed to make queries to ${\sf MEM}_{\Fmf(\agents),l_i}$ and
${\sf EQ}_{\Fmf(\agents),l_i}$ (without knowing what the target $l_i$ to be learned is),
and eventually halts and outputs some $h\in L_i$ with
$h\equiv_\Lmc l_i$.
An \emph{epistemic learning algorithm} for $\Fmf_i\in\Fmf(\agents)$ is a deterministic  
algorithm that takes no input,
 is allowed to make queries to ${\sf MEM}^\K_{\Fmf(\agents),l^k_i}$  and
 ${\sf EX}_{\Fmf(\agents),l^k_i}$ (without knowing what the target $l^1_i$ to be learned is),
and eventually halts after receiving `you  finished' from ${\sf EX}_{\Fmf(\agents),l^k_i}$.

We say that $\Fmf(\agents)$ is \emph{exactly learnable} 
if there is an exact learning
algorithm $A_{i}$ for each  
$\Fmf_i\in \Fmf(\agents)$ and that $\Fmf(\agents)$ is \emph{polynomial query exactly learnable}
if     each        $\Fmf_i\in\Fmf(\agents)$ is exactly learnable by an algorithm $A_{i}$ 
such that at every step
the sum of the sizes of the inputs to 
queries made by $A_{i}$ up to that step is bounded by a polynomial
$p(|l_i|,|x|)$, where $l_i$ is the target and $x \in X_i$ is the largest
example seen so far by $A_{i}$.  
$\Fmf(\agents)$ is
\emph{ polynomial time exactly learnable} if each $\Fmf_i\in \Fmf(\agents)$   is exactly learnable by an
 algorithm $A_{i}$ 
such that 
at every step (we count each call to an oracle
as one step of computation) of computation the time used by $A_{i}$ up to
that step is bounded by a polynomial $p(|l_i|,|x|)$, where $l_i\in L_i$
is the target and $x \in X$ is the largest counterexample seen so far.
We  may also say that $\Fmf(\agents)$ is learnable in $O(|l_i|,|x|)$ many steps, 
 following the same notion of polynomial time learnability, except that 
the number of steps is bounded by $O(|l_i|,|x|)$.

We say that $\Fmf(\agents)$ is \emph{ epistemically learnable } 
if there is an epistemic learning
algorithm for each $\Fmf_i\in \Fmf(\agents)$.   
Polynomial query/time epistemic learnability is defined analogously, 
with $p(|l^1_i|,|x|)$ defined in terms of $|l^1_i|$ and $|x|$. 
Clearly, if a learning framework $\Fmf(\agents)$   is polynomial time exactly/epistemically learnable
then it is also polynomial query exactly/epistemically learnable.

\section{Epistemic and exact polynomial learnability}\label{sec:epiVSexac} 
\newcommand{\setone}[1]{\ensuremath{s^{\Kmc}_{{#1}}}\xspace} 
\newcommand{\settwo}[1]{\ensuremath{s^{\Lmc}_{{#1}}}\xspace} 

In this section we confront polynomial query and polynomial time learnability 
in the exact and epistemic learning models.
We start by considering the case where the learner is only allowed 
to pose one type of query. Clearly,  polynomial (query/time) exact learnability 
 with only membership queries
coincides with  polynomial   epistemic learnability with only \K-membership queries. 
We  now analyse polynomial learnability with equivalence queries only and 
example queries only. Our first result is that polynomial (query/time) learnability 
in the epistemic learning model implies polynomial learnability in exact learning model. 

 \begin{theorem}\label{thm:onedirection}
If a multi-agent learning framework   is  polynomial query (resp. time) epistemically learnable 
with only example queries then it is   polynomial query (resp. time) exactly learnable with only 
equivalence queries. 
\end{theorem}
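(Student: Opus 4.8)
The plan is to simulate an epistemic learning algorithm that uses only example queries by an exact learning algorithm that uses only equivalence queries. Fix a multi-agent learning framework $\Fmf(\agents)$ and suppose each $\Fmf_i$ admits an epistemic learning algorithm $B_i$ using only example queries, with the polynomial (query or time) bound. I would build an exact learning algorithm $A_i$ for $\Fmf_i$ that runs $B_i$ internally, maintaining a hypothesis $h$ initialised to something trivial (e.g.\ the empty conjunction, i.e.\ $\top$), together with the set $S$ of examples that $B_i$ has ``received'' so far. Whenever $B_i$ issues its $k$-th example query ${\sf EX}_{\Fmf(\agents),l^k_i}(j)$, the algorithm $A_i$ instead poses an equivalence query with the current hypothesis $h$ (which semantically is the conjunction $\bigwedge S$ of examples gathered so far).

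The key observation is the correspondence between the two oracles. The example oracle returns some $x\in X_i$ with $l^1_i\models_\Lmc x$ but $l^k_i\not\models_\LK \K_j x$. Since, by construction, $l^k_i = l^1_i \wedge \bigwedge_{y\in S}\K_j y$ and the accessibility relations are equivalence relations, one shows $l^k_i\models_\LK \K_j x$ iff $\bigwedge_{y\in S} y \models_\Lmc x$, i.e.\ iff $h\models_\Lmc x$. Hence ``$x$ is a legal answer to the example query'' is equivalent to ``$x$ is a positive example for $l^1_i$ not entailed by $h$'', which is precisely a (positive) counterexample for $l^1_i$ and $h$ in the exact model. So when the equivalence oracle returns a counterexample $x$ for $l_i=l^1_i$ and $h$: if it is positive, $A_i$ feeds $x$ to $B_i$ as the example-oracle reply, adds $x$ to $S$, and updates $h := h\wedge x$; if the equivalence oracle answers ``yes'', then $h\equiv_\Lmc l_i$ and $A_i$ halts outputting $h$. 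One must also argue that the equivalence oracle never needs to return a \emph{negative} counterexample here: since every element of $S$ is a positive example for $l^1_i$, we always have $l_i\models_\Lmc h$, so every counterexample is necessarily positive. Finally, when $B_i$ halts after ``you finished'' from the example oracle, this means no legal $x$ exists, i.e.\ $h\models_\Lmc x$ for every positive example $x$ of $l^1_i$, which together with $l_i\models_\Lmc h$ gives $h\equiv_\Lmc l_i$; at that point $A_i$ poses one last equivalence query, receives ``yes'', and halts. (In fact the ``you finished'' step can be simulated by the ``yes'' answer directly.)

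For the complexity transfer: each example query of $B_i$ is simulated by exactly one equivalence query of $A_i$, and the input to that equivalence query is $h=\bigwedge S$, whose size is bounded by the sum of sizes of examples previously returned, hence by the same polynomial $p(|l^1_i|,|x|)$ that bounds $B_i$ (note $l_i=l^1_i$, and the largest example seen coincides in both models). This gives polynomial query learnability. For polynomial time, the only extra work $A_i$ does per step is maintaining $S$ and forming the conjunction $h$, which is linear in the accumulated size, so the time bound is preserved up to a polynomial factor.

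The main obstacle I expect is the semantic lemma equating $l^k_i\models_\LK\K_j x$ with $\bigwedge_{y\in S} y\models_\Lmc x$: this requires a careful argument that conjoining the epistemic axioms $\K_j y$ to the \Lmc-theory $l^1_i$ contributes, as far as entailments of the form $\K_j x$ are concerned, exactly the $\Lmc$-closure of $\{y : y\in S\}$ and nothing more—in particular that $l^1_i$ itself does not ``leak'' into the $\K_j$ modality. Since the $\Rmc_j$ are equivalence relations and the $\K_j y$ are the only occurrences of $\K_j$ in $l^k_i$, this should follow by constructing, for any $x$ with $\bigwedge_{y\in S}y\not\models_\Lmc x$, a pointed \LK interpretation satisfying $l^k_i$ but refuting $\K_j x$: take the actual world to model $l^1_i$ and put a $\Rmc_j$-accessible world modelling $\bigwedge S$ but not $x$. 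The remaining steps (book-keeping, the direction of the counterexample, the halting condition) are routine.
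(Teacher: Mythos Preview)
Your proposal is correct and follows essentially the same approach as the paper: simulate the epistemic learner by posing an equivalence query with the conjunction of previously returned examples as hypothesis, using the semantic equivalence between $l^k_i\models_\LK\K_j x$ and $\bigwedge_{y\in S} y\models_\Lmc x$ (the paper states this equivalence more tersely, appealing to the fact that $l_i$ contains no \LK axioms). You are in fact more explicit than the paper on two points---why returned counterexamples are necessarily positive, and how one would actually establish the semantic lemma via a two-world countermodel---but the structure of the argument is the same.
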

\begin{proof}
 Assume $\Fmf(\agents)$ is 
polynomial query epistemically learnable with only example queries
(the case of polynomial time epistemic learnability with only example queries can be 
similarly proved). 
For each $\Fmf_i\in\Fmf(\agents)$ there is an epistemic learning algorithm 
$A_{i}$  for $\Fmf_i$ with polynomial query complexity 
which only asks example queries.
To construct an exact learning algorithm $A'_{i}$ for $\Fmf_i$ which only 
asks equivalence queries using
$A_{i}$, we define   auxiliary sets $\setone{i}(k)$ and $\settwo{i}(k)$ which 
will keep  the information returned by ${\sf EQ}_{\Fmf(\agents),l_i}$ 
 up to the $k$-th query posed by 
a fixed but arbitrary agent in $\agents\setminus\{i\}$ and agent $i$.
We define $\setone{i}(1)=\emptyset$ and $\settwo{i}(1)=\emptyset$.
\begin{itemize}
\item Whenever  $A_{i}$ poses an example query to  agent $i\in\agents$  
(assume it is the $k$-th query),
  $A'_{i}$ calls the oracle ${\sf EQ}_{\Fmf(\agents),l_i}$ 
with $\settwo{i}(k)$ as input. The oracle either returns `yes' if 
$\settwo{i}$ is equivalent to $l_i$ or it returns some 
  counterexample for $l_i$ and $\settwo{i}(k)$ 
  (we may write $\settwo{i}(k)$ to denote $\bigwedge_{\beta\in\settwo{i}(k)}\beta$).  
Then $A'_{i}$ adds $\K_j x$ to $\setone{i}(k)$ and $x$ to $\settwo{i}(k)$.  
\end{itemize}  
If ${\sf EQ}_{\Fmf(\agents),l_i}$ returns `yes' then 
 algorithm $A'_{i}$ converts it into `you finished', as expected by 
 algorithm $A_{i}$. 
We now argue that, for all $x\in X_i$ and all $k\geq 0$ such that $l_i\models_\Lmc x$,
we have that $x$ is a (positive) counterexample  for  $l_i$ and $\settwo{i}(k)$ 
iff  
$l_i\wedge \setone{i}(k)\not\models_\LK\K_j x $. 
By definition of $\setone{i}(k)$ and   $\settwo{i}(k)$ and since $l_i$ 
does not contain \LK axioms,  for all $x\in X_i$ and all $k\geq 0$, we have that
$l_i\wedge \setone{i}(k)\models_\LK  \K_j x$ 
iff $\settwo{i}(k) \models_\Lmc x$.
By definition and construction of $\settwo{i}(k)$,
it follows that $ l_i\models_\Lmc\settwo{i}(k)$. 
So $\settwo{i}(k)\not\models_\Lmc x  $ iff $l_i\wedge \setone{i}(k)\not\models_\LK 
\K_j x  $. 
Hence ${\sf EQ}_{\Fmf(\agents),l_i}$ can simulate ${\sf EX}_{\Fmf(\agents),l^k_i}$, 
where $k$ represents the number of calls to ${\sf EX}_{\Fmf(\agents),l^k_i}$ 
posed so far by $A_i$.  
By definition of $A_{i}$, at every step,
the sum of the sizes of the inputs to 
queries made by $A_{i}$ up to that step is bounded by a polynomial
$p(|l_i|,|x|)$, where $l_i$ is the target and $x \in X_i$ is the largest 
counterexample seen so far by $A_{i}$.
%
Then, for all $k\geq 0$, we have that $|\settwo{i}(k)|\leq|\setone{i}(k)|\leq p(|l_i|,|x|)$. 
Since all responses to queries are as required 
by $A_{i}$, if $A_{i}$ halts 
after polynomially many polynomial size 
queries, the same happens with $A'_{i}$, which 
returns a hypothesis $\settwo{i}(k)$ equivalent to the target $l_i$, for 
some $k\leq p(|l_i|,|x|)$. 
\end{proof}

The converse of Theorem~\ref{thm:onedirection} does not hold, 
as we show in the next theorem. 

\begin{theorem}\label{thm:harder}
There is a multi-agent learning framework $\Fmf(\agents)$ such that 
$\Fmf(\agents)$  is  polynomial time exactly learnable with only 
equivalence queries but not  polynomial query (so, not polynomial time) 
epistemically learnable 
with only example queries. 
\end{theorem}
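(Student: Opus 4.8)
I would prove the separation with a single \EL learning framework $\Fmf_0=(X,L)$ used identically by all agents (so $\Fmf(\agents)=\{\Fmf_i=\Fmf_0\mid i\in\agents\}$), designed so that equivalence queries let a learner find the target by a brute-force enumeration in polynomially many steps, while example queries force the oracle to hand back exponentially many small, pairwise-incomparable consequences of the target. Concretely, using concept names $A_0,A_1,\dots$ and two role names $r_0,r_1$, for $n\ge 1$ let $l_n$ be the depth-$n$ binary-tree TBox $\{A_j\sqsubseteq\exists r_0.A_{j+1}\sqcap\exists r_1.A_{j+1}\mid 0\le j<n\}$, let $L=\{l_n\mid n\ge 1\}$, and let $X=\{y_v\mid v\in\{0,1\}^+\}$ where $y_v:=A_0\sqsubseteq\exists r_{v_1}.\cdots\exists r_{v_k}.\top$ for $v=v_1\cdots v_k$. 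I would first record three easy facts, all immediate from the canonical tree model of $l_n$ and from the single-path models of the $y_v$: (a) $l_n\models_{\EL}y_v$ iff $|v|\le n$; (b) $y_w\models_{\EL}y_v$ iff $v$ is a prefix of $w$, so for words $w_1,\dots,w_m$ of length at most $n$ and any word $v$ of length $n$ one has $\{y_{w_1},\dots,y_{w_m}\}\models_{\EL}y_v$ iff $v\in\{w_1,\dots,w_m\}$; and (c) $|l_n|=\Theta(n)$ and every $y_v$ with $|v|\le n$ has size $O(n)$.

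For polynomial-time exact learnability with equivalence queries only, the learner just enumerates $L$: it asks ``$l_1\equiv l_i$?'', then ``$l_2\equiv l_i$?'', and so on, ignoring the counterexamples, and halts with $l_m$ at the first ``yes''. If the target is $l_n$ this takes exactly $n$ queries; by (c) the total size of the submitted hypotheses is $\sum_{m\le n}|l_m|=O(n^2)$ and building them is polynomial, so since $n\le|l_n|$ the time used at every step is bounded by a polynomial in $|l_n|$ (and $|x|$).

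For the negative half, note that an epistemic algorithm restricted to example queries cannot influence the oracle at all, since the only input to an example query is the name of the agent being asked; its query complexity is therefore exactly the number of examples the oracle returns before it answers ``you finished''. Fix the target $l_n$. By (a) the oracle may only ever return some $y_w$ with $|w|\le n$, and once the learner has accumulated a set $S$ of such examples the oracle may still return $y_v$ precisely when $S\not\models_{\EL}y_v$; by (b), for $|v|=n$ this is the case whenever $y_v\notin S$. Hence the oracle is forced to return each of the $2^n$ pairwise-incomparable examples $\{y_v\mid v\in\{0,1\}^n\}$ as a separate answer before it can say ``you finished'', so every run of every epistemic algorithm for $\Fmf_i$ makes at least $2^n$ example queries, while $|l_n|$ and all examples in play have size $O(n)$. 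Since $2^n$ outstrips every polynomial in $n$ and $n$ is unbounded over $L$, $\Fmf(\agents)$ is not polynomial query---a fortiori not polynomial time---epistemically learnable with example queries only. It is nonetheless epistemically learnable: by (a) the consequences of $l_n$ in $X$ are finite, so a learner that keeps asking example queries eventually receives ``you finished''.

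The delicate point is to make a single small target entail exponentially many \emph{pairwise-incomparable yet individually small} examples: only then does the forced example-query count $2^n$ really beat the admissible bound $p(|l_n|,|x|)=p(\Theta(n),O(n))$, which is polynomial in $n$. This is exactly what existential branching in \EL provides---a TBox of size $\Theta(n)$ forces a full binary tree of depth $n$, hence the $2^n$ root-to-leaf witnesses $y_v$---and it is also what keeps the enumeration strategy for equivalence queries polynomial, as $|l_n|$ is only linear in $n$. Facts (a)--(c) and the ``you finished'' bookkeeping are then routine.
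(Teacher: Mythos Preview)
Your proof is correct and establishes the separation, but it takes a genuinely different route from the paper's argument.

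The paper works in propositional logic: it fixes $L$ to be the class of all formulas equivalent to a single fixed implication $p\rightarrow q$, so exact learning needs exactly one equivalence query (guess $p\rightarrow q$). For the lower bound, the paper observes that an \emph{adversarial} example oracle can keep returning weakened implications $p\wedge p_1^{\ell_1}\wedge\cdots\wedge p_n^{\ell_n}\rightarrow q$, one per binary string $(\ell_1,\dots,\ell_n)$, none of which entails another.

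You instead work directly in \EL with an infinite hypothesis class $L=\{l_n\mid n\ge 1\}$ and a matching example set of root-to-leaf path axioms $y_v$. This costs you a slightly more elaborate upper bound---you need the linear enumeration over $L$ rather than a single query---and you must verify the \EL entailment facts (a)--(c), which are routine but not entirely trivial (in particular, that entailment from a \emph{set} $\{y_{w_1},\dots,y_{w_m}\}$ reduces to the prefix condition, not just pairwise entailment; your tree-model argument handles this). In exchange you get two things the paper's proof does not: the construction lives in \EL, the logic the paper ultimately targets, and your lower bound is strictly stronger---because each length-$n$ example $y_v$ can only be covered by returning $y_v$ itself, \emph{every} oracle, not just an adversarial one, must hand out all $2^n$ of them before ``you finished''. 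Your parameterisation via the target index $n$ is also cleaner than the paper's somewhat implicit dependence on the variable set.

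One small wording point: when you write ``the oracle is forced to return each of the $2^n$ pairwise-incomparable examples'', the force comes from the ``you finished'' condition together with fact~(b), not from any oracle strategy; you might make explicit that this yields a lower bound on \emph{all} runs rather than just an adversarial one, since that is in fact what your argument shows.
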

\begin{proof}
Consider the learning framework $\Fmf=(X, L)$ where 
$X$ is the set of propositional formulas over the variables 
${\sf Prop}=\{q,p,p^0_1,\ldots,p^0_n,p^1_1,\ldots,p^1_n\}$ and $L=\{\varphi\mid 
\varphi \in X, \varphi \equiv (p\rightarrow q)\}$ (where $\equiv$ denotes 
logical equivalence in propositional logic). So 
the target can only be a formula equivalent to $p\rightarrow q$. 
Now   let $\Fmf(\agents)$ be the set 
$\{\Fmf_i=(X, L)\mid i \in \agents\}$, with all learning frameworks 
are equal to $\Fmf$ (this does not mean that the target is the same 
for all agents but that they are taken from the same set $L$).
If $L$ is a language 
which only contains propositional formulas equivalent to $p\rightarrow q$, 
an exact learning algorithm can learn the target with 
only one equivalence query, passing  the hypothesis $\{p\rightarrow q\}$ as input.
However, ${\sf EX}_{\Fmf(\agents),\{p\rightarrow q\}}$ can return 
any of the exponentially many examples of the form $p\wedge
 (p^{\ell_1}_1\wedge \ldots\wedge p^{\ell_n}_n)\rightarrow q$,
with $\ell_j\in\{0,1\}$ and  $j\in \{1,\ldots,n\}$.  
The example oracle can always provide an example which does not 
follow from its knowledge of what is known by the learner  by taking a fresh binary sequence.  
Thus, there is no epistemic algorithm which can learn the target 
with polynomially many queries. 
\end{proof}

Interestingly, if we consider both types of queries then  polynomial exact learnability  
  \emph{coincides} with  polynomial epistemic learnability.
\begin{restatable}{theorem}{TheoremTransfer}\label{thm:transfer}
Let $\Fmf(\agents)$ be a multi-agent learning framework.  
 $\Fmf(\agents)$ is  polynomial query (resp.   time) 
  exactly learnable if, and only if, 
$\Fmf(\agents)$ is  polynomial query (resp.   time)  epistemically  learnable. 
\end{restatable}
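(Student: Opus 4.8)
The statement has two directions. One direction, epistemic $\Rightarrow$ exact with only example/equivalence queries, is already Theorem~\ref{thm:onedirection}; since the same construction does not touch membership/$\K$-membership queries, it immediately extends to the case where both query types are available: any $\K$-membership query posed by $A_i$ is relayed verbatim to ${\sf MEM}_{\Fmf(\agents),l_i}$ by $A'_i$ (the answer depends only on $l^1_i=l_i$, which is an \Lmc formula, so nothing changes), and example queries are simulated by equivalence queries exactly as in the proof of Theorem~\ref{thm:onedirection}. So the plan for that direction is: re-run the construction of Theorem~\ref{thm:onedirection}, add the trivial pass-through clause for ${\sf MEM}^\K$, and observe that the polynomial query/time bounds are preserved since each simulated step costs $O(1)$ oracle calls and the inputs have the same sizes.

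For the converse, exact $\Rightarrow$ epistemic, I would take an exact learning algorithm $A_i$ using ${\sf MEM}_{\Fmf(\agents),l_i}$ and ${\sf EQ}_{\Fmf(\agents),l_i}$ and build an epistemic algorithm $A'_i$ using ${\sf MEM}^\K$ and ${\sf EX}$. Membership queries of $A_i$ are again relayed directly to ${\sf MEM}^\K$ (discarding the side-effect on $l^k_i$, which is invisible to $A'_i$). The interesting part is simulating an equivalence query ${\sf EQ}_{\Fmf(\agents),l_i}(h)$ for a hypothesis $h$ that $A_i$ proposes. The idea: maintain, as in Theorem~\ref{thm:onedirection}, the bookkeeping sets $\setone{i}$ and $\settwo{i}$ recording the examples returned so far by ${\sf EX}$ (so that $l^k_i \equiv l^1_i \wedge \bigwedge \K_j x$ over those examples), together with a record of which membership queries returned `yes'. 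When $A_i$ asks an equivalence query on $h$, $A'_i$ must decide whether $h \equiv_\Lmc l_i$ and, if not, produce a counterexample. It issues one example query ${\sf EX}_{\Fmf(\agents),l^k_i}(j)$. If the oracle returns `you finished', then every positive example of $l_i$ is already $\K_j$-entailed by $l^k_i$, i.e.\ entailed by $\settwo{i}$; since by construction $l_i \models_\Lmc \settwo{i}(k)$ always holds, this gives $l_i \equiv_\Lmc \settwo{i}(k)$ — but we need to relate this to $h$, not to $\settwo{i}$. This is where care is needed.

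The way I would resolve it: have $A'_i$ keep feeding every counterexample it learns (from ${\sf EX}$ and, for negative examples, from membership queries it deliberately issues on candidate consequences of $h$) directly into $A_i$ as if they came from ${\sf EQ}(h)$, and repeat. Concretely, to simulate ${\sf EQ}(h)$: first ask ${\sf EX}(j)$; if it returns a positive example $x$ with $l_i \models_\Lmc x$, check whether $h \models_\Lmc x$ — if not, $x$ is a valid positive counterexample and we hand it to $A_i$; if $h \models_\Lmc x$, record it (it refines $\settwo{i}$) and loop. Separately, to catch negative counterexamples (where $h \models_\Lmc x$ but $l_i \not\models_\Lmc x$), $A'_i$ enumerates candidate consequences of $h$ and tests them with ${\sf MEM}^\K$; a `no' answer yields a negative counterexample. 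The halting/correctness argument: if ${\sf EX}(j)$ returns `you finished' and no negative counterexample is found among the (finitely/polynomially many) relevant consequences of $h$, then $h$ and $l_i$ agree on all examples and we answer `yes' to $A_i$. The main obstacle, and the step I expect to require the most work, is making the search for negative counterexamples both \emph{complete} (so that `yes' is only answered when truly $h \equiv_\Lmc l_i$) and \emph{polynomially bounded} — in general $h$ has infinitely many consequences, so one must exploit that $A_i$, being a correct exact learner, will itself only ever need polynomially many counterexamples of polynomial size, and argue that a counterexample of the required size can be found by a bounded search, or alternatively restructure the simulation so that ${\sf EX}$ and ${\sf MEM}^\K$ together always suffice. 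I would then close by checking that each simulated equivalence query uses $O(1)$ epistemic queries plus a bounded search, so polynomial query (resp.\ time) complexity transfers, and that the sizes $|l^1_i| = |l_i|$ and the largest example match, so the polynomial $p(|l^1_i|,|x|)$ is of the same order as $p(|l_i|,|x|)$.
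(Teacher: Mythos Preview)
Your $(\Leftarrow)$ direction (epistemic $\Rightarrow$ exact) is essentially the paper's argument: Theorem~\ref{thm:onedirection} plus a pass-through of $\K$-membership queries to ${\sf MEM}$, together with adding every `yes'-answered $x$ to $\settwo{i}$ so that the hypothesis fed to ${\sf EQ}$ tracks the oracle's epistemic state. That is fine.

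The $(\Rightarrow)$ direction, however, has a genuine gap, and it is precisely the one you flag but do not close. Your simulation of ${\sf EQ}(h)$ has two unbounded loops. First, after calling ${\sf EX}$ and getting back a positive example $x$ with $h\models_\Lmc x$, you ``record it and loop''; but the oracle is only obliged to return some $x$ not $\K_j$-entailed by its current state $l^k_i$, and there is no bound on how many such $x$ it may produce that are nonetheless entailed by $h$ (this is exactly the adversarial behaviour exploited in Theorem~\ref{thm:harder}). Second, your search for negative counterexamples by enumerating consequences of $h$ and testing them with ${\sf MEM}^\K$ is, as you note, not polynomially bounded in general, and the appeal to ``$A_i$ only needs polynomially many counterexamples'' does not help: you would need to \emph{find} one such counterexample, and nothing in the exact-learning guarantee tells you where to look.

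The paper resolves both problems with a single idea you are missing: before calling ${\sf EX}$, the epistemic learner $A'_i$ calls ${\sf MEM}^\K$ on \emph{every conjunct $x\in h$}. This has two effects. If some $x\in h$ gets answer `no', then $l_i\not\models_\Lmc x$ while $h\models_\Lmc x$, so $x$ is already a negative counterexample and is handed to $A_i$; no search over consequences is needed. If all answers are `yes', then the oracle's state $l^k_i$ now contains $\K_j x$ for every $x\in h$, hence (by normality of $\K_j$) $l^k_i\models_\LK \K_j z$ for every $z$ with $h\models_\Lmc z$. A single call to ${\sf EX}$ then returns either `you finished' (and $A'_i$ halts) or some $y$ with $l_i\models_\Lmc y$ and $l^k_i\not\models_\LK \K_j y$, which forces $h\not\models_\Lmc y$; so $y$ is a positive counterexample in one shot. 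The cost of simulating one equivalence query is thus at most $|h|+1$ epistemic queries, and since $|h|$ is bounded by $p(|l_i|,|x|)$, the overall query (resp.\ time) bound is at most quadratic in $p$.
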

\begin{proof} 
($\Rightarrow$) In our proof we use  polynomial query exact learnability, the argument 
for  polynomial time exact learnability is analogous.
Assume $\Fmf(\agents)$ is polynomial query 
exactly learnable. 
Then, for each $\Fmf_i\in\Fmf(\agents)$ there is an exact learning algorithm 
$A_{i}$  for $\Fmf_i$. 
We construct an epistemic learning algorithm $A'_{i}$ for $\Fmf_i$ using
$A_{i}$ as follows. Recall that  
we write $l^k_i$ to denote the target $l^k_i$ after the $k$-th query (Section~\ref{subsec:epistemic-model}). 
\begin{itemize}
\item Whenever $A_{i}$ poses a membership query to  agent $i\in\agents$ 
with $x\in X_i$ as input,
$A'_i$ calls   ${\sf MEM}^\K_{\Fmf(\agents),l^k_i}$ 
with $x$ as input, where $k$ represents the number of queries posed 
so far by $A'_{i}$.
\item Whenever $A_{i}$ poses an equivalence query to  agent $i\in\agents$ 
with a hypothesis $h$ as input, we have that, for each $x\in h$, $A'_i$ calls  
 ${\sf MEM}^\K_{\Fmf(\agents),l^k_i}$ with $x$ as input (and $k$ is incremented).
Then, the algorithm calls the oracle ${\sf EX}_{\Fmf(\agents),l^k_i}$.
\end{itemize}  
  ${\sf MEM}^\K_{\Fmf(\agents),l_i}$  behaves as it 
is required by algorithm $A_{i}$ to learn $\Fmf_i$. 
We show that 
whenever 
${\sf EX}_{\Fmf(\agents),l^k_i}$ returns some $x\in X_i$ we have that 
$x$ is a counterexample for $l^1_i$ and $h$, where 
$h$ is the input  of the 
equivalence query 
posed  by $A_{i}$.
By definition of $A_{i}$, at every step,
the sum of the sizes of the inputs to 
queries made by $A_{i}$ up to that step is bounded by a polynomial
$p(|l^1_i|,|x|)$, where $l^1_i$ is the target and $x \in X_i$ is the largest 
example seen so far by $A_{i}$.
Let $h^\ell$ denote the input to the $\ell$-th
equivalence query posed by $A_{i}$.
For all $\ell > 0$, we have that $|h^\ell|\leq p(|l^1_i|,|x|)$. 
The fact that $x$ is a counterexample for $l^1_i$ and $h^\ell$ 
follows from the definition of $A'_i$, which poses
 membership queries for each $x\in h^\ell$, 
 ensuring that  $l^k_i$ is  updated with $\K_j x$ after each query. 
Hence  ${\sf EX}_{\Fmf(\agents),l^k_i}$ 
returns counterexamples for $l^1_i$ and $h^\ell$  (if they exist),
as ${\sf EQ}_{\Fmf(\agents),l^1_i}$.  
Since $A_{i}$ poses only polynomially many queries, $\ell$ is bounded 
by  $p(|l^1_i|,|x|)$.
So the sum of the sizes of the inputs to  
queries made by the epistemic learning algorithm $A'_{i}$ simulating $A_{i}$  
  is quadratic in $p(|l^1_i|,|x|)$. 
All in all, since all responses to queries are as required 
by $A_{i}$, if $A_{i}$ halts and outputs some $h\in L_i$ with
$h\equiv_\Lmc l^1_i$ (with $h$ the input to the last equivalence query)  
after polynomially many polynomial size 
queries, we have that ${\sf EX}_{\Fmf(\agents),l^1_i}$  is forced to return 
`you finished'
and so $A'_{i}$ also halts after polynomially many polynomial size 
queries. 
The ($\Leftarrow$)  direction is similar 
to the proof of Theorem~\ref{thm:onedirection}, except that we 
now also have (\K-)membership queries. 
\end{proof}


\section{The epistemic \EL description logic}\label{sec:reasoning}


To instantiate the multi-agent epistemic learning problem to the \EL 
case, in Section~\ref{sec:learningEL}, we define and study in this section the 
epistemic extension of \EL, called \ELK.
We present \EL~\cite{dlhandbook} in Section~\ref{sec:el}.
\ELK is   the instantiation of \LK presented in Section~\ref{subsec:extension} with the logic \EL.
We establish the complexity of the satisfiability problem for \ELK in Section~\ref{sec:reasoning-elk} and of one of its fragments in Section~\ref{sec:reasoning-conj-elk}.

We showed in Section~\ref{sec:epiVSexac}   that example queries give strictly less power to the learner than equivalence queries.
We also argued, quite informally so far, that example queries are less demanding on the oracle 
than equivalence queries.
Instead of deciding whether two ontologies are equivalent, and then providing a counterexample 
when it is not the case, the oracle only needs to reason about what they know about the knowledge of the learner.
Yet, we did not say anything about the actual complexity of the epistemic reasoning involved in example queries.
If reasoning about the knowledge of the learner is harder than 
evaluating the equivalence of two ontologies, then the advantage of example queries 
for the oracle would be moot.
We show that indeed the epistemic reasoning that 
the oracle needs to perform is in 
\PTime (Theorem~\ref{th:conj-elk}).
So, the oracle's benefit from example queries 
over equivalence queries is a net benefit.

\subsection{\EL: syntax, semantics, and complexity}
\label{sec:el}

\EL  concepts  $C,D$ are expressions of the form:
$
C,D ::= \top\mid A \mid \exists r.C\mid    C\sqcap D 
$
where $A \in \NC $ and $r \in \NR $.
An \emph{inclusion} is an expression of the form $C\sqsubseteq D$
where $C,D$ are \EL  concept expressions; and 
an \emph{assertion} is of the form $ A(a)$ or $r(a,b)$
with $a,b\in\NI$, $A\in\NC$, and $r\in \NR$. 
An \emph{\EL axiom} is an inclusion or an assertion.  
An \emph{\EL formula}\footnote{Typically an \EL \emph{ontology} is a set of \EL axioms~\cite{dlhandbook}, and can also be seen as a conjunction of positive \EL axioms.
Here we also consider \EL \emph{formulas}, where we allow negations and conjunctions 
over the axioms.} is an expression of the form
$
\alpha ::=   \elaxiom \mid \neg \alpha\mid \alpha\wedge\alpha   
$ where $\elaxiom$ is an \EL axiom. 
An \EL literal is an \EL axiom or its negation. 
The semantics of \EL is given by \Lmc interpretations $\Imc=(\Delta^\Imc,\cdot^\Imc)$ as defined 
in Section~\ref{subsec:extension}, considering  $\Lmc=\EL$.
We extend the mapping  $\cdot^\Imc$ for \EL complex concept expressions as follows:
\begin{gather*}
\top^{\Imc} := \Delta^\Imc, \qquad
(C \sqcap D)^{\Imc} := C^{\Imc} \cap D^{\Imc}, \\
(\exists r .C)^{\Imc} := \{d \in \Delta^\Imc \mid \exists  e \in C^{\Imc}: (d,e) \in r^{\Imc} \}.
\end{gather*}
We now define the entailment relation $\models_{\EL}$ for \EL formulas. 
Given an \EL interpretation \Imc and an \EL axiom (which can be an inclusion or an assertion, as above) we define:
$\Imc\models_{\EL} C\sqsubseteq D  \text{ iff }  C^\Imc\subseteq D^{\Imc}$;
$\Imc\models_{\EL} A(a)  \text{ iff }  a^\Imc\in A^{\Imc}$; and 
$\Imc\models_{\EL} r(a,b)   \text{ iff } (a^\Imc,b^\Imc)\in r^\Imc $. 
We   
  inductively extend the relation $\models_{\EL}$ to   \EL formulas   as in 
Section~\ref{subsec:extension}:  
$\Imc\models_{\EL} \varphi\wedge\psi  \text{ iff }  \Imc\models_{\EL} 
\varphi$ and $\Imc\models_{\EL} \psi$; and 
$\Imc\models_{\EL} \neg\varphi   \text{ iff not }  \Imc\models_{\EL} 
\varphi$.
In our proofs, we use the following result.
\begin{restatable}{lemma}{LemmaComplConjElLit}\label{lem:compl-conj-el-lit}   
Satisfiability of a conjunction of \EL literals is \PTime-complete~\cite{DBLP:conf/ijcai/BorgwardtT15}.
\end{restatable}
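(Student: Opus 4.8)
The plan is to establish both the upper bound and the lower bound for satisfiability of a conjunction of \EL literals, i.e., a conjunction of \EL axioms and negated \EL axioms. For the \textbf{lower bound}, \PTime-hardness is inherited immediately from \EL: deciding whether an \EL ontology (a conjunction of positive \EL axioms) entails a single inclusion $C \sqsubseteq D$ is already \PTime-hard~\cite{BBL-EL}, and such an entailment question reduces to unsatisfiability of the conjunction of the ontology axioms with the negated inclusion $\neg(C\sqsubseteq D)$. Since the class of conjunctions of \EL literals is closed under complementation of the instance in the obvious way, \PTime-hardness transfers.

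For the \textbf{upper bound}, the idea is that a conjunction of \EL literals $\varphi = \bigwedge_m \elaxiom_m \wedge \bigwedge_n \neg \elaxiom'_n$ is satisfiable iff there is some interpretation satisfying all positive conjuncts and falsifying each negative one. First I would observe that the positive part alone, $\Omc := \bigwedge_m \elaxiom_m$, is always satisfiable (\EL has no negation, so $\Omc$ has a model — indeed a canonical/universal model that can be built in polynomial time and that is universal for entailment of \EL axioms). Then $\varphi$ is satisfiable iff for \emph{each} negated conjunct $\neg\elaxiom'_n$ there is a model of $\Omc$ falsifying $\elaxiom'_n$, which — because the negated conjuncts are independent and one can take disjoint unions / products of models — is equivalent to: for each $n$, $\Omc \not\models_{\EL} \elaxiom'_n$. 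Each such non-entailment check is decidable in \PTime (entailment in \EL is \PTime-complete~\cite{BBL-EL}), and there are only linearly many of them, so the whole test runs in polynomial time. The one point needing care is the claim that satisfying all negative conjuncts simultaneously reduces to satisfying each separately; this follows by taking, for each $n$, a model $\Imc_n \models \Omc$ with $\Imc_n \not\models \elaxiom'_n$, and combining them — for assertional axioms one uses a disjoint union with renamed individuals where needed, and for inclusion axioms $C \sqsubseteq D$ the witnessing domain elements are preserved in the disjoint union since \EL concept membership is preserved under disjoint unions.

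The \textbf{main obstacle} is getting the model-combination argument exactly right in the presence of both assertions and inclusions, and in particular handling negated assertions versus negated inclusions uniformly; the subtlety is that a single fixed interpretation must witness all negations at once, and one must check that disjoint union (or a suitable product construction) does not accidentally revive a conjunct that was falsified in a component. Since the statement is cited verbatim from~\cite{DBLP:conf/ijcai/BorgwardtT15}, where this is proved in the more general setting of $\mathcal{ELH}$ (or similar) literals, I would simply invoke that reference for the formal argument and only sketch the reduction above to keep the presentation self-contained. In short: \PTime-hardness from \EL entailment, membership by reducing satisfiability of the literal conjunction to linearly many \EL non-entailment checks.
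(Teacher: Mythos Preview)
Your approach is correct and takes a different route from the paper's (alternative) proof. The paper encodes a conjunction of \EL literals as an $\mathcal{EL}^{++}$ ontology via a translation $\tau$ that uses nominals and $\bot$: for instance $\tau(\neg(C\sqsubseteq D)) = \{\{f\}\sqsubseteq C,\ \{f\}\sqcap D\sqsubseteq\bot\}$ for a fresh individual $f$, and $\tau(\neg A(a)) = \{\{a\}\sqcap A\sqsubseteq\bot\}$; the \PTime upper bound then follows directly from the known complexity of $\mathcal{EL}^{++}$ satisfiability~\cite{BBL-EL}. You instead stay within \EL, reducing satisfiability of $\Omc\wedge\bigwedge_n\neg\elaxiom'_n$ to the conjunction of non-entailments $\Omc\not\models_{\EL}\elaxiom'_n$, each a standard \PTime check. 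The paper's route is a single uniform reduction that offloads all the work to $\mathcal{EL}^{++}$; yours is more elementary and makes the independence of the negative literals explicit. One imprecision worth flagging: the phrase ``disjoint union with renamed individuals where needed'' does not quite work, since assertions are about specific named individuals and renaming them defeats the purpose. The clean combination is to interpret all individuals in the canonical model $\Imc_\Omc$ of the positive part $\Omc$ (which already falsifies every non-entailed assertion simultaneously), and then disjointly adjoin, for each negated inclusion $\neg(C_n\sqsubseteq D_n)$, the anonymous part of a canonical model of $\Omc$ extended with a fresh witness for $C_n$. This is precisely the construction the paper carries out later in the proof of Lemma~\ref{lem:el-form-polysize}.
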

We establish in Section~\ref{sec:reasoning-elk} that 
reasoning about \ELK formulas is \NP-complete, 
just like reasoning about \EL formulas.
We note that $\mathcal{EL}(\mathcal{K})$  formulas allow arbitrary Boolean 
combinations of $\mathcal{EL}(\mathcal{K})$ axioms, 
hence the contrast with the \PTime complexity of entailment from an \EL ontology~\cite{BBL-EL}. In Section~\ref{sec:reasoning-conj-elk} we show that reasoning about \ELK restricted to conjunctions of literals is in \PTime.

\subsection{Reasoning in \ELK}
\label{sec:reasoning-elk}

Here we study  the complexity of the satisfiability problem 
in \ELK. 
Our combination of epistemic logic and description logic 
is orthogonal to the work by De Giacomo et.~al~\cite{DBLP:conf/kr/GiacomoINR96}:
while our epistemic operators are over \EL formulas, 
the epistemic operators of the mentioned work are over 
concepts and roles. For instance, there, $\K {\sf FrenchChef}$ denotes 
the concept of \emph{known} French chefs. Here,    \ELK contains formulas such as $\K_i({\sf FrenchChef(Soyer)}) 
\land \lnot\K_i\K_j({\sf Crepe} \sqsubseteq \exists {\sf contains. Egg})$ 
indicating that agent $i$ knows that Soyer is a French chef, but $i$ does 
not know that $j$ knows that crepes contain egg.

From the definition of the language of \LK in Section~\ref{subsec:extension}, remember that the language of \ELK does not admit alternating modalities; E.g., $\K_i\lnot \K_j A(a)$ is not a formula of \ELK. It is rather easy to see that if there were no such syntactic restrictions, the satisfiability problem would turn out to be \PSpace-complete. (We could reduce satisfiability and adapt the tableaus method of propositional $S5_n$~\cite{HALPERN1992319}.)
Instead, we establish that satisfiability in \ELK is \NP-complete.

The lower bound follows from \NP-hardness for propositional logic. 
The following lemma is central for showing membership in \NP. It is a
consequence of the fact that \EL and propositional logic have 
the polynomial size model property and that in \ELK   
the satisfiability test can be separated into two independent tests: 
one for 
the DL dimension and one for the epistemic dimension
(see~\cite{Baader:2012:LOD:2287718.2287721,DBLP:conf/ijcai/BorgwardtT15}).
\begin{restatable}{lemma}{LemmaPolyModel}\label{lem:poly-size}
\ELK enjoys the polynomial size model property. 
\end{restatable}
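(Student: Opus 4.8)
The plan is to exhibit, for a satisfiable \ELK formula $\varphi$, a pointed \LK interpretation whose size is polynomial in $|\varphi|$, by building the DL dimension and the epistemic dimension separately and then gluing them. First I would observe that since the modalities in \ELK do not alternate, the truth of $\varphi$ at a pointed interpretation depends only on: (i) the propositional/DL content of the \EL axioms appearing in $\varphi$ evaluated at the pointed world; and (ii) for each agent $i$ and each \LK axiom $\K_i\beta$ occurring (positively or negatively, possibly nested) in $\varphi$, whether $\beta$ holds throughout the $\Rmc_i$-class of the pointed world. So I would first guess a propositional valuation of the top-level Boolean structure of $\varphi$ over the set of its \LK axioms, consistent with $\varphi$, which tells us which axioms $\K_{i_1}\cdots\K_{i_m}\beta$ must be true and which must be false at the pointed world.

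Next I would handle the epistemic dimension. Because the $\Rmc_i$ are equivalence relations (so each agent's knowledge is an $S5$ modality) and there is no alternation, the relevant structure collapses: for a true requirement $\K_i\beta$ we need $\beta$ to hold at every $\Rmc_i$-successor, and for a false requirement $\lnot\K_i\gamma$ we need one $\Rmc_i$-successor where $\gamma$ fails. Each such $\beta$ or $\gamma$ is itself again of the form $\K_{j}\cdots$ or an \EL axiom, and one can argue that it suffices to take, for each agent $i$, polynomially many worlds in the $\Rmc_i$-class — one "witness world" per negative requirement nested under $\K_i$ — together with a single world satisfying all the positive requirements. Unwinding the (bounded-depth, alternation-free) nesting, the total number of \EL interpretations needed is bounded by a polynomial in $|\varphi|$.

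Then I would invoke the polynomial size model property of \EL (equivalently, of conjunctions of \EL literals, cf.\ Lemma~\ref{lem:compl-conj-el-lit} and the standard polynomial-model results for \EL~\cite{BBL-EL,DBLP:conf/ijcai/BorgwardtT15}): each world of the \LK interpretation must satisfy some conjunction of \EL literals (those \EL axioms of $\varphi$, and those $\beta$'s, assigned to it), whose size is at most $|\varphi|$, and hence can be realised by an \EL interpretation of size polynomial in $|\varphi|$. Replacing each world by such a small interpretation, and keeping the $\Rmc_i$ as the (polynomially many) equivalence classes described above, yields an \LK interpretation of size polynomial in $|\varphi|$ that still satisfies $\varphi$ — the key point being that making a world "internally" larger or smaller does not change which \EL axioms it satisfies, and the epistemic truth conditions only refer to those.

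The main obstacle I expect is the bookkeeping for the epistemic dimension: carefully arguing that the number of worlds one needs in each $\Rmc_i$-class stays polynomial despite nesting of $\K$'s, and that one can consistently assign to every world a conjunction of \EL literals of polynomial size so that all positive and negative $\K$-requirements are simultaneously met. This is where the alternation-free restriction of \ELK is essential — it prevents the branching blow-up that would otherwise push us to \PSpace, as the remark before the lemma notes — and where I would need to be precise, e.g.\ by induction on modal depth, that a satisfiable $\varphi$ has a model of "width" and "depth" both bounded polynomially, before applying the \EL polynomial-model property worldwise.
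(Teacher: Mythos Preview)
Your proposal is correct and would go through, but it takes a different route from the paper. You build a small model \emph{from scratch}: guess a Boolean valuation, add witness worlds for negative $\K$-requirements, then realise each world by a small \EL interpretation. The paper instead uses a \emph{pruning} argument: start from an arbitrary model $(\Imf,\Imc)\models\varphi$, first replace each world by a polynomial-size \EL interpretation satisfying the same \EL axioms of $\varphi$ (invoking the polynomial model property of \EL formulas, as you do), and then restrict $\Wmc$ to $\Imc$ together with, for each \ELK axiom $\K_{i_1}\cdots\K_{i_k}\alpha$ that is \emph{false} at $\Imc$, one witness path of at most $k$ worlds refuting it. The relations $\Rmc_i$ are simply restricted to this subset. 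Positive $\K$-axioms are then preserved automatically, since restricting the accessibility relations can only make universal modal statements easier to satisfy; the retained witness paths guarantee the negative ones stay false.

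The advantage of the paper's approach is precisely that it dissolves the ``main obstacle'' you identify: there is no need to argue by induction on modal depth that a consistent assignment of \EL-literal conjunctions to freshly created worlds exists, because the worlds (and which \EL axioms they satisfy) are inherited from the original model. The polynomial bound on the number of worlds is also immediate---at most one world per modal operator occurrence in $\varphi$, plus the pointed world. Your constructive route buys a more self-contained argument (and is closer in spirit to the explicit model construction used later for conjunctive \ELK in Proposition~\ref{prop:corr-conj-EL}), at the cost of the extra bookkeeping you anticipate.
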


Since \ELK formulas can be translated into 
first-order logic, for a fixed \ELK formula $\varphi$, 
  checking whether a polynomial size interpretation is a model of $\varphi$ 
can be performed in \NL. 
Thus, membership in \NP is by the fact that, 
by Lemma~\ref{lem:poly-size}, one can guess a polynomial size 
model (if one exists) and check that it is a model 
in $\NL\subseteq\PTime$.

\begin{restatable}{theorem}{TheoremELKCompl}\label{th:elk}
  Satisfiability in \ELK is \NP-complete.
\end{restatable}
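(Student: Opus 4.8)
The plan is to prove \NP-completeness of satisfiability in \ELK by establishing the two bounds separately, with the upper bound resting entirely on the preceding lemmas.

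\medskip

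\noindent\textbf{Lower bound.} First I would observe that every propositional formula is (modulo a trivial renaming of propositional variables as concept assertions $A(a)$, or even as inclusions $\top\sqsubseteq A$) an \ELK formula: the Boolean connectives $\neg,\wedge$ are literally part of the \ELK syntax, and a propositional valuation corresponds to an \EL interpretation choosing which atomic assertions hold. Since an \ELK formula built only from such atoms and Booleans is satisfiable iff the corresponding propositional formula is, \textsf{SAT} reduces in polynomial time to \ELK-satisfiability, so the problem is \NP-hard. I would state this in one or two sentences since it is routine.

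\medskip

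\noindent\textbf{Upper bound.} For membership in \NP I would invoke Lemma~\ref{lem:poly-size}: given an \ELK formula $\varphi$, if $\varphi$ is satisfiable then it has a model $\Imf=(\Wmc,\{\Rmc_i\}_{i\in\agents})$ whose size (the number of worlds, the domain size of each \Lmc-interpretation, etc.) is bounded by a polynomial $q(|\varphi|)$. The nondeterministic algorithm guesses such a structure of size at most $q(|\varphi|)$ together with the distinguished pointed world $\Imc\in\Wmc$, and then verifies that $\Imf,\Imc\models\varphi$. As noted in the text just before the theorem, model-checking a fixed \ELK formula against a given polynomial-size interpretation translates into first-order model checking and can be done in $\NL\subseteq\PTime$; hence the verification step runs in deterministic polynomial time in $|\varphi|+|\Imf|$, and the whole procedure is in \NP. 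Combining with the lower bound gives \NP-completeness.

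\medskip

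\noindent\textbf{Main obstacle.} The only genuinely non-trivial ingredient is the polynomial-size model property, i.e.\ Lemma~\ref{lem:poly-size}, which the excerpt defers and attributes to the separation of the satisfiability test into an independent DL dimension (polynomial model property of \EL) and an epistomic $S5_n$-style dimension (polynomial model property for the modal layer), using the fact that the \ELK syntax forbids alternating modalities. Assuming that lemma, the proof of Theorem~\ref{th:elk} itself is short: the hardness is immediate from propositional logic, and membership is the standard ``guess-and-check'' argument. So in writing the proof I would keep it terse, citing Lemma~\ref{lem:poly-size} for the guess phase and the preceding paragraph (the FO-translation and $\NL$ model-checking observation) for the check phase.
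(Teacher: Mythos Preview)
Your proposal is correct and matches the paper's own argument essentially line for line: the lower bound is the immediate reduction from propositional \textsf{SAT}, and the upper bound is the guess-and-check argument using Lemma~\ref{lem:poly-size} for the guess and the first-order translation with \NL model checking for the check. There is nothing to add; the only substantive work is indeed deferred to Lemma~\ref{lem:poly-size}, exactly as you note.
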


%

%
%

\subsection{Reasoning in conjunctive \ELK}
\label{sec:reasoning-conj-elk}
\newcommand{\ELPP}[0]{$\EL^{++}$\xspace}
\newcommand{\KK}{\ensuremath{{\mathbb K}}\xspace}
\newcommand{\elaxiomlit}{\ensuremath{{\sf l}}\xspace}

We conclude this section considering the satisfiability problem 
for \emph{conjunctive \ELK}, defined 
as the fragment of \ELK which only allows negations 
in front of \EL axioms or 
in front of \ELK 
axioms of the form $\KK  \alpha$, with $\alpha$ a conjunction of \EL literals 
and $\KK $  a non-empty sequence of epistemic 
operators. 
%
Formally, conjunctive \ELK formulas $\varphi$ are expressions of the   form: 
$
 \varphi ::=  \alpha \mid \beta \mid \lnot \beta \mid \varphi\wedge \varphi
$
 with
 $\beta ::= \K_i \alpha \mid  \K_i \beta$,
 and 
$\alpha ::=   \elaxiom \mid \neg \elaxiom\mid \alpha\wedge\alpha$,
where $\elaxiom$ is an \EL axiom. 

\begin{algorithm}[t]
\caption{$SAT(\varphi)$, deciding the satisfiability of conjunctive \ELK formulas\label{A:conjunctive-elk-sat}}
\begin{algorithmic}[1]
\Require A conjunctive \ELK formula $\varphi$
\Ensure TRUE if $\varphi$ is satisfiable, and FALSE otherwise
\If {$\omega_0 \land \bigwedge \{ \omega \mid \KK_\sigma \omega \in \varphi^\flat \}$ is not \EL satisfiable}
\State \Return{FALSE}
\EndIf
\For {$\lnot \KK_{\sigma}\omega \in \varphi^\flat$}
\State \label{lab:algo-start-check} $\psi = \top \land \bigwedge \{ \omega' \mid \KK_{\sigma'} 
\omega' \in \varphi^\flat, \text{ and } \sigma \text{ is a subword of } \sigma' \}$
\State $MS = \{\psi \land \lnot \beta \mid \beta \text{ is an \EL literal in }  \omega \}$
\If{all conjunctions of \EL literals in $MS$ are not \EL satisfiable} 
\State \Return {FALSE}
\EndIf \label{lab:algo-end-check}
\EndFor
\State \Return {TRUE}
\end{algorithmic}
\end{algorithm}
To establish the complexity of reasoning in conjunctive \ELK, we use the following notation.
For every non-empty sequence $\sigma = a_1 \ldots a_k \in \agents^+$ of agents,
 we associate a sequence    $\KK_\sigma = \K_{a_1} \ldots \K_{a_k}$ 
 of epistemic operators. 
We write $\beta \in \psi$ if $\beta$ is a conjunct occurring in $\psi$.
%
We say that $\sigma'\in \agents^+$ is a \emph{subword} of $\sigma\in \agents^+$ 
when $\sigma'$ is the result of removing zero or more elements from $\sigma$ (at any position 
of the sequence).
Given a conjunctive $\ELK$ formula 
\[
\varphi=\omega_0 \land
\KK_{\sigma_1} \omega_{\sigma_1} \land \ldots \land \KK_{\sigma_n} \omega_{\sigma_n} \land
\lnot \KK_{\sigma_{n+1}} \omega_{\sigma_{n+1}} \land \ldots \land \lnot \KK_{\sigma_m} \omega_{\sigma_m} \enspace  
\]
where $\sigma_i \in \agents^+$, for every $1 \leq i \leq m$,  
and each $\omega_i$, with $0 \leq i \leq m$, is a conjunction of \EL literals,
we denote by $\varphi^\flat$ the formula resulting from 
exhaustively substituting in $\varphi$ every adjacent repetitions 
$a \ldots a$ of an agent $a$ occurring in $\sigma_i$, $1 \leq i \leq m$, with $a$. (E.g., $a_1a_2a_2a_3a_2$ becomes $a_1a_2a_3a_2$.)

The following proposition is central to the correctness of Algorithm~\ref{A:conjunctive-elk-sat}.
\begin{restatable}{proposition}{PropCorrConjEL}\label{prop:corr-conj-EL}
%
  A conjunctive \ELK formula $\varphi$ is unsatisfiable iff at least one of the following properties holds:
  \begin{enumerate}
  \item $\omega_0 \land \bigwedge \{ \omega \mid \KK_{\sigma} \omega \in \varphi^\flat \}$ is not \EL satisfiable;
  \item there is $\lnot \KK_{\sigma}\omega \in \varphi^\flat$ such 
  that $\lnot\omega \land \bigwedge \{ \omega' \mid \KK_{\sigma'}\omega' \in 
  \varphi^\flat, \text{ and } \sigma\linebreak  \text{ is a subword of } \sigma' \}$ is not \EL satisfiable.
  \end{enumerate}
\end{restatable}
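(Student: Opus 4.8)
The plan is to prove the two directions of the biconditional separately, reducing everything to the semantics of \LK interpretations restricted to the conjunctive fragment, using the fact (Lemma~\ref{lem:compl-conj-el-lit}) that \EL-satisfiability of a conjunction of \EL literals is decidable. Throughout I would work with $\varphi^\flat$ rather than $\varphi$, noting first (as a warm-up observation) that $\varphi \equiv \varphi^\flat$ because the relations $\Rmc_i$ are equivalence relations, so $\K_i\K_i\beta \equiv \K_i\beta$; hence collapsing adjacent repetitions of an agent changes neither satisfiability nor the set of entailed axioms.

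For the \emph{if} direction (soundness: if (1) or (2) holds then $\varphi$ is unsatisfiable), I would argue contrapositively by showing any \LK model $(\Imf,\Imc)$ of $\varphi$ refutes both properties. If $\Imf,\Imc \models \varphi$, then every positive conjunct $\KK_\sigma\omega$ forces $\omega$ to hold at $\Imc$ itself (apply the $\K$-clause of the semantics repeatedly along the equivalence relations, using reflexivity), and $\omega_0$ holds at $\Imc$; so $\Imc$ is an \EL model of the conjunction in (1), refuting it. For (2), given $\lnot\KK_\sigma\omega \in \varphi^\flat$ with $\sigma = a_1\ldots a_k$, there is a world $\Jmc$ reachable from $\Imc$ by $\Rmc_{a_1}\circ\cdots\circ\Rmc_{a_k}$ with $\Jmc \models \lnot\omega$; the key point is that every positive conjunct $\KK_{\sigma'}\omega'$ of $\varphi^\flat$ with $\sigma$ a subword of $\sigma'$ also propagates $\omega'$ to $\Jmc$, because one can interleave the reflexive $\Rmc$-steps to realize $\sigma$ as a sub-path of a $\sigma'$-path; hence $\Jmc$ is an \EL model of $\lnot\omega \land \bigwedge\{\omega' \mid \ldots\}$, refuting (2).

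For the \emph{only if} direction (completeness: if neither (1) nor (2) holds then $\varphi$ is satisfiable) I would construct a canonical \LK model. Assuming (1) fails, fix an \EL interpretation $\Imc_0$ satisfying $\omega_0 \land \bigwedge\{\omega \mid \KK_\sigma\omega \in \varphi^\flat\}$. For each negative conjunct $\lnot\KK_{\sigma}\omega$ with $\sigma = a_1\ldots a_k$, since (2) fails, fix an \EL model $\Imc_{\sigma}$ of $\lnot\omega \land \bigwedge\{\omega' \mid \KK_{\sigma'}\omega' \in \varphi^\flat, \sigma \text{ subword of } \sigma'\}$; note $\lnot\omega$ gives a literal $\lnot\beta$ with $\beta$ a conjunct of $\omega$ that $\Imc_\sigma$ falsifies, so $\Imc_\sigma$ belongs to one of the sets $MS$ of Algorithm~\ref{A:conjunctive-elk-sat} (this is where the algorithm's case split on \EL literals in $\omega$ is justified). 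Then build $\Wmc$ from $\Imc_0$ together with a chain of copies of $\Imc_\sigma$ hung off $\Imc_0$ along a path of fresh intermediate worlds labelled by $a_1,\ldots,a_k$, and let $\Rmc_i$ be the reflexive-symmetric-transitive closure of "the edges labelled $i$". Pointing at $\Imc_0$, one checks: $\omega_0$ holds; each $\KK_\sigma\omega$ holds because $\omega$ was put into $\Imc_0$ and into every $\Imc_{\sigma'}$ with $\sigma$ a subword of $\sigma'$, and these are exactly the worlds reachable by $\KK_\sigma$ — here the subword condition is doing the real work, and one must check that taking the equivalence closure does not create unwanted reachabilities forcing some $\omega$ where it should fail; and each $\lnot\KK_\sigma\omega$ holds because $\Imc_\sigma$ is reachable by $\KK_\sigma$ and falsifies $\omega$.

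The main obstacle is the completeness direction, specifically verifying that after closing the labelled edges into \emph{equivalence} relations, the positive \K-conjuncts are still satisfied — i.e. that the worlds $\Rmc_{a_1}\circ\cdots\circ\Rmc_{a_k}$-reachable from $\Imc_0$ are exactly (a subset of) the $\Imc_0$ plus those $\Imc_{\sigma'}$ whose label sequence $\sigma'$ has $\sigma$ as a subword, so that every $\omega$ demanded by $\KK_\sigma\omega$ genuinely holds there. Handling this cleanly requires a careful bookkeeping lemma: a world $\Imc_{\sigma'}$ is reachable from $\Imc_0$ by the relation composition associated with $\sigma$ if and only if $\sigma$ is a subword of $\sigma'$ (using that each $\Rmc_i$ is reflexive to "skip" letters and that the tree-like construction with distinct intermediate worlds prevents spurious merges). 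I would isolate this as a short combinatorial claim about paths in the constructed Kripke structure and then the rest follows by routine induction on the epistemic operator nesting.
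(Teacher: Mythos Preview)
Your overall strategy coincides with the paper's: pass to $\varphi^\flat$ using $S5$ idempotence, handle soundness via reflexivity together with the observation that $\sigma$ a subword of $\sigma'$ gives $\Rmc_\sigma\subseteq\Rmc_{\sigma'}$, and for completeness build a tree-shaped Kripke structure with a root world $\Imc_0$ and one branch per negative conjunct, then close each $\Rmc_i$ to an equivalence relation. The soundness direction is correct.

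There is, however, a genuine gap in the completeness construction. You introduce ``fresh intermediate worlds'' on each branch but never specify which \EL interpretation they carry, and your verification of the positive conjuncts simply ignores them: the claim that the $\Rmc_\sigma$-reachable worlds from $\Imc_0$ are ``exactly'' $\Imc_0$ and the endpoints $\Imc_{\sigma'}$ is false, since reflexive steps in the composition $\Rmc_{a_1}\circ\cdots\circ\Rmc_{a_k}$ also land on intermediate worlds. If the intermediates are taken to be copies of $\Imc_\sigma$ (as your phrase ``a chain of copies of $\Imc_\sigma$'' might suggest), the argument actually breaks: with a negative conjunct $\lnot\KK_{abc}\,\omega$ and a positive conjunct $\KK_a\,\eta$, the first intermediate world on that branch is $\Rmc_a$-reachable from $\Imc_0$ yet need not satisfy $\eta$, since $abc$ is not a subword of $a$. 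The paper repairs this by letting the $i$-th intermediate world on the branch for $\lnot\KK_{a_1\ldots a_k}\omega$ be an \EL model of $\bigwedge\{\omega'\mid\KK_{\sigma'}\omega'\in\varphi^\flat,\ a_1\ldots a_i\text{ a subword of }\sigma'\}$, which exists because property~(1) fails. A smaller slip: in your reachability bookkeeping the subword direction is reversed. By the definition you yourself give for $\Imc_{\sigma'}$, the formula $\omega$ from a positive conjunct $\KK_\sigma\omega$ is put into $\Imc_{\sigma'}$ when $\sigma'$ is a subword of $\sigma$, not the other way round; correspondingly the endpoint $\Imc_{\sigma'}$ is $\Rmc_\sigma$-reachable from $\Imc_0$ only if $\sigma'$ is a subword of $\sigma$ (your ``if'' direction fails: take $\sigma=ac$, $\sigma'=abc$). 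Only this one-sided implication, together with the correct treatment of the intermediate worlds, is needed.
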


Proposition~\ref{prop:corr-conj-EL} suggests that the satisfiability of conjunctive \ELK formulas can be reduced to checking the satisfiability of a few conjunctions of \EL literals.
We are finally ready to prove the complexity of deciding whether a conjunctive \ELK formula is satisfiable.

\begin{restatable}{theorem}{TheoremConjunction}\label{th:conj-elk}
  Satisfiability in conjunctive \ELK is \PTime-complete.
\end{restatable}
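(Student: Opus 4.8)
The plan is to establish hardness and membership separately, relying on Proposition~\ref{prop:corr-conj-EL} and Lemma~\ref{lem:compl-conj-el-lit} for the latter. For \PTime-hardness, I would simply observe that a conjunction of \EL literals is itself a conjunctive \ELK formula (take the epistemic part empty), so the \PTime-hardness of satisfiability of conjunctions of \EL literals from Lemma~\ref{lem:compl-conj-el-lit} transfers immediately. Thus the real content is the upper bound.

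For membership in \PTime, I would argue that Algorithm~\ref{A:conjunctive-elk-sat} is correct and runs in polynomial time. Correctness is essentially a restatement of Proposition~\ref{prop:corr-conj-EL}: the algorithm returns FALSE exactly when condition~(1) holds (the first \texttt{if}) or when condition~(2) holds (the loop over each $\lnot \KK_\sigma \omega \in \varphi^\flat$). The one point that needs care is the rephrasing of condition~(2) inside the loop: Proposition~\ref{prop:corr-conj-EL}(2) asks whether $\lnot\omega \land \bigwedge\{\omega' \mid \KK_{\sigma'}\omega' \in \varphi^\flat,\ \sigma \text{ subword of } \sigma'\}$ is \EL-unsatisfiable, where $\omega$ is a conjunction of \EL literals, so $\lnot\omega$ is a \emph{disjunction} of negated \EL literals. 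I would note that a disjunction $\lnot\beta_1 \lor \dots \lor \lnot\beta_r$ conjoined with $\psi$ is \EL-unsatisfiable iff $\psi \land \lnot\beta_t$ is \EL-unsatisfiable for \emph{every} $t$; this is exactly why the algorithm forms the set $MS = \{\psi \land \lnot\beta \mid \beta \text{ an \EL literal in } \omega\}$ and checks that \emph{all} of its members are unsatisfiable. (Here $\psi$ is $\top$ conjoined with the relevant $\omega'$s, and each member of $MS$ is again a conjunction of \EL literals, since $\lnot\beta$ is an \EL literal.) So each iteration reduces condition~(2) for a fixed $\lnot\KK_\sigma\omega$ to polynomially many satisfiability checks of conjunctions of \EL literals.

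For the running time: computing $\varphi^\flat$ (collapsing adjacent repetitions in each modality prefix) is linear; the first test is a single \EL-literal-conjunction satisfiability check, in \PTime by Lemma~\ref{lem:compl-conj-el-lit}; the loop runs at most $|\varphi|$ times, and in each iteration building $\psi$ requires scanning $\varphi^\flat$ and testing the subword relation between two sequences of length $\le |\varphi|$ (a simple greedy linear-time test), and $MS$ has at most $|\varphi|$ elements, each a conjunction of \EL literals of size polynomial in $|\varphi|$. Hence the whole procedure performs polynomially many \PTime subroutine calls on polynomial-size inputs, giving an overall \PTime bound. I expect the main obstacle to be the bookkeeping in the correctness argument — specifically, making fully precise that the loop's use of $MS$ faithfully captures the "disjunction of negated literals" in Proposition~\ref{prop:corr-conj-EL}(2), and confirming that every object passed to the \EL-satisfiability subroutine is genuinely a conjunction of \EL literals (so that Lemma~\ref{lem:compl-conj-el-lit} applies) rather than a more general \EL formula.
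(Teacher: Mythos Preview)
Your proposal is correct and follows essentially the same approach as the paper: correctness of Algorithm~\ref{A:conjunctive-elk-sat} via Proposition~\ref{prop:corr-conj-EL}, with the key observation that checking unsatisfiability of $\lnot\omega \land \psi$ reduces to checking unsatisfiability of each $\lnot\beta \land \psi$ for the literals $\beta$ in $\omega$, and then invoking Lemma~\ref{lem:compl-conj-el-lit} for the polynomial-time bound. You are in fact slightly more thorough than the paper, which omits the explicit \PTime-hardness argument (leaving it implicit in the completeness statement of Lemma~\ref{lem:compl-conj-el-lit}) and gives less detail on the running-time analysis.
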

\begin{proof}
  %
  Consider Algorithm~\ref{A:conjunctive-elk-sat}. The conjunctive \ELK formula $\varphi$ is 
  satisfiable iff $SAT(\varphi)$ returns TRUE. The correctness of the algorithm 
  follows immediately from Prop.~\ref{prop:corr-conj-EL}. It suffices to 
  observe that Lines~\ref{lab:algo-start-check}--\ref{lab:algo-end-check} check the unsatisfiability of an \EL formula $\lnot\omega \land \psi$ where $\omega$ and $\psi$ are two of conjunctions of \EL literals ($\lnot\omega \land \psi$ is \emph{not} a conjunction of \EL literals, unless $\omega$ contains only one literal) by checking the unsatisfiability of as many conjunctions of \EL literals $\lnot \beta \land \psi$ as there are literals $\beta$ in $\omega$.
A simple analysis shows that the algorithm runs in time polynomial 
in the size of $\varphi$, with a polynomial number of calls to a procedure 
for checking the unsatisfiability of conjunctions of \EL literals. By Lemma~\ref{lem:compl-conj-el-lit}, 
  each of these checks can be done in polynomial time. Membership in \PTime follows.
\end{proof}


\section{Learning  \EL  with epistemic reasoning } \label{sec:learningEL}
It is known that \EL ontologies are not polynomial query exactly learnable, while 
the   fragments of \EL restricting 
one of the sides of inclusions to be a concept name,
namely $\EL_{\sf lhs}$ and $\EL_{\sf rhs}$,  
 are exactly learnable in polynomial time~\cite{KLOW18}.
 In this section, 
 we transfer results known for \EL and its fragments 
 to our learning model. 
%
%
Our results  are for learning frameworks 
where the learning language is the same for all agents. 
That is, we deal with
the special case of a 
multi-agent learning framework $\Fmf(\agents)=\{\Fmf_i= (X_i, L_i)\mid i\in \agents\}$
   where all formulas in all $L_i$ are 
 from a DL $\Lmc$,    denoted $\Fmf(\Lmc,\agents)$.
Theorem~\ref{thm:expel} is a consequence of Theorem~\ref{thm:transfer} 
and   complexity results for \EL and its fragments in 
the exact learning model~\cite{KLOW18}. 

\begin{theorem}\label{thm:expel}
The learning framework $\Fmf(\EL,\agents)$ is not 
polynomial query epistemically learnable.
The learning frameworks $\Fmf(\EL_{\sf lhs},\agents)$ and  $\Fmf(\EL_{\sf rhs},\agents)$
are polynomial  time epistemically learnable. 
\end{theorem}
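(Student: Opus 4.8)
The plan is to obtain Theorem~\ref{thm:expel} as an immediate corollary of Theorem~\ref{thm:transfer} together with the known exact-learnability results for \EL and its fragments in~\cite{KLOW18}. The key observation is that Theorem~\ref{thm:transfer} establishes, for an arbitrary multi-agent learning framework $\Fmf(\agents)$, that polynomial query (resp.\ time) exact learnability coincides with polynomial query (resp.\ time) epistemic learnability. Since $\Fmf(\EL,\agents)$, $\Fmf(\EL_{\sf lhs},\agents)$ and $\Fmf(\EL_{\sf rhs},\agents)$ are themselves special cases of multi-agent learning frameworks (each $\Fmf_i$ being the single-agent learning framework for the respective DL), the transfer applies directly once we know the single-agent situation.

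First I would recall the single-agent results from~\cite{KLOW18}: \EL ontologies are \emph{not} polynomial query exactly learnable (with membership and equivalence queries), whereas $\EL_{\sf lhs}$ and $\EL_{\sf rhs}$ \emph{are} polynomial time exactly learnable. I would then note that these statements lift to the multi-agent frameworks $\Fmf(\EL,\agents)$, $\Fmf(\EL_{\sf lhs},\agents)$, $\Fmf(\EL_{\sf rhs},\agents)$ essentially by definition: a multi-agent framework $\Fmf(\Lmc,\agents)$ is polynomial query (resp.\ time) exactly learnable iff each component $\Fmf_i$ is, and all components here are copies of the same single-agent \Lmc framework. For the positive cases one simply runs the known polynomial-time exact learning algorithm for $\EL_{\sf lhs}$ (resp.\ $\EL_{\sf rhs}$) independently for each agent $i$, querying that agent's oracles; for the negative case, if some $\Fmf_i$ in $\Fmf(\EL,\agents)$ were polynomial query exactly learnable we would contradict the \EL lower bound of~\cite{KLOW18}.

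Having established the exact-learnability status of the three multi-agent frameworks, I would invoke Theorem~\ref{thm:transfer} in both directions. For the negative part: if $\Fmf(\EL,\agents)$ were polynomial query epistemically learnable, then by the ($\Leftarrow$) direction of Theorem~\ref{thm:transfer} it would be polynomial query exactly learnable, contradicting~\cite{KLOW18}; hence $\Fmf(\EL,\agents)$ is not polynomial query epistemically learnable. For the positive part: since $\Fmf(\EL_{\sf lhs},\agents)$ and $\Fmf(\EL_{\sf rhs},\agents)$ are polynomial time exactly learnable, the ($\Rightarrow$) direction of Theorem~\ref{thm:transfer} gives that they are polynomial time epistemically learnable.

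The proof is essentially bookkeeping, so I do not expect a genuine obstacle; the only point that deserves a sentence of care is the reduction from the multi-agent framework to the single-agent one---making explicit that learnability of $\Fmf(\Lmc,\agents)$ is exactly the conjunction over agents of learnability of the identical single-agent frameworks, so that both the upper bounds and the lower bound transfer cleanly. Once that is spelled out, Theorem~\ref{thm:expel} follows from Theorem~\ref{thm:transfer} and~\cite{KLOW18} with no further work.
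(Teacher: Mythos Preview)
Your proposal is correct and matches the paper's own justification: the paper states that Theorem~\ref{thm:expel} is a consequence of Theorem~\ref{thm:transfer} together with the exact-learnability results for \EL and its fragments from~\cite{KLOW18}, and you have simply made this derivation explicit. Your additional remark about lifting the single-agent results to the multi-agent framework $\Fmf(\Lmc,\agents)$ is appropriate, since by definition learnability of $\Fmf(\agents)$ is precisely learnability of each component $\Fmf_i$.
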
 
 
The hardness result for \EL holds even for the fragment of \EL ontologies 
defined as the union of $\EL_{\sf lhs}$ and $\EL_{\sf rhs}$, that is,
in a named form where at least one of the sides of concept inclusions 
is a concept name,  
which we call   $\EL_{{\sf lhs}, {\sf rhs}}$.  
An implementation of a learning algorithm for \EL ontologies in this named form 
was presented by Duarte et.~al~\cite{DBLP:conf/dlog/DuarteKO18,DBLP:conf/kr/DuarteKO18}. 
The algorithm is exponential in the size of the vocabulary $\Sigma_\Omc$ of the ontology  $\Omc$ 
(which is the set 
of concept/role and individual names occurring in \Omc)
and the largest concept expression $C_\Omc$\footnote{`The largest' concept expression (and, later, counterexample) 
refers 
to the maximum of the sizes of counterexamples/concept expressions.}, but it is not exponential in the size of the whole 
ontology.

\newcommand{\hai}{\hskip\algorithmicindent}
\begin{algorithm}[t]
\begin{algorithmic}[1]
\Require  An \EL terminology \Omc given to the oracle; $\Sigma_\Omc$ given to the learner
\Ensure An \EL terminology \Hmc computed by the learner such that $\Omc\equiv_\EL\Hmc$
\State $\mathcal{H}:=\{{\sf a}\mid {\sf MEM}^\K_{\Fmf,\Omc^k}({\sf a}){=}\text{`yes'}, \ {\sf a} \text{ is 
a $\Sigma_\Omc$-assertion or } {\sf a}=A\sqsubseteq B, \ A,B\in\Sigma_\Omc\}$\label{ln:first}
\While{$  {\sf EX}_{\Fmf ,\Omc^k}\neq$   `you finished'}
\State Let $C \sqsubseteq D$ be the returned positive 
 example for $\mathcal{O}$  
\State Compute, with ${\sf MEM}^\K_{\Fmf ,\Omc^k}$, $C' \sqsubseteq D'$ such that 
 $C'$ or $D'$   in $\Sigma_\Omc\cap\NC$  
 \label{line:ce}  
\If {$C' \in \Sigma_\Omc\cap\NC$} 
\State Compute with ${\sf MEM}^\K_{\Fmf,\Omc^k}$ a right \Omc-essential   ${\sf a}$ from 
$C' \sqsubseteq D'\sqcap \bigsqcap\limits_{C'\sqsubseteq F'\in \Hmc}F'$ 
\Else
\State Compute with ${\sf MEM}^\K_{\Fmf,\Omc^k}$ a left \Omc-essential   ${\sf a}$ from $C' \sqsubseteq D'$ 
\EndIf
\State Add ${\sf a}$ to $\mathcal{H}$
\EndWhile 
\State \Return {\Hmc}
\end{algorithmic}
\caption{Adaptation of the learning algorithm for $\EL_{\sf lhs,rhs}$~\cite{DBLP:conf/kr/DuarteKO18}
\label{alg:learner}}
\end{algorithm}

Theorem~\ref{thm:transfer} is not directly applicable in this case, 
however, we observe that if the exact learning algorithm uses the epistemic 
learning model, then 
the outcome of each example query
will   be 
a counterexample, and so, the complexity result obtained 
with that algorithm is transferable to the epistemic setting.
To see this, consider Algorithm~\ref{alg:learner}, which 
is an adaptation of the  exact learning algorithm for $\EL_{\sf lhs,rhs}$
 \cite{DBLP:conf/dlog/DuarteKO18,DBLP:conf/kr/DuarteKO18}. 
 Assume \Fmf in Algorithm~\ref{alg:learner} is $\Fmf(\EL_{{\sf lhs}, {\sf rhs}},\agents)$.
The number of $\Sigma_\Omc$-assertions (defined as 
 assertions with only symbols from $\Sigma_\Omc$)
is polynomial 
in the size of \Omc, so, in Line~\ref{ln:first}, Algorithm~\ref{alg:learner} 
identifies those that occur in \Omc using \K-membership queries. 
It follows that all examples returned by the oracle in 
the `while' loop are concept inclusions.  
In each iteration of the `while' loop, the algorithm  uses the examples
returned by the \EXoracle oracle  to 
compute what is called `left \Omc-essential' and `right \Omc-essential'
concept inclusions using \K-membership queries, 
and then updates the hypothesis with such inclusions.
We do not go into details of the algorithm, which is fully presented in the mentioned reference,
but point out that it 
only adds to its hypothesis 
concept inclusions that follow from the target ontology \Omc.

Since we use 
\K-membership queries,   the oracle is aware of the knowledge 
obtained by the learner in this way and does not return examples which 
follow from such entailments. With an inductive argument 
on the number of iterations of the main loop of the algorithm
one can show  that, 
at each iteration, if the learner asks for an example query 
instead of an equivalence query, the outcome will indeed be 
a counterexample for \Omc and \Hmc. So the number of membership and equivalence queries is the same 
as the number of \K-membership and example queries. 
Moreover, the hypothesis \Hmc computed by Algorithm~\ref{alg:learner}  is equivalent to the target \Omc 
(where $\Omc=\Omc^1$, so without epistemic axioms).  
Our next theorem formalises the fact that   
the number of queries performed by the exact learning algorithm has the same bound  
  in the epistemic learning framework. 

\begin{theorem}
$\Fmf(\EL_{{\sf lhs}, {\sf rhs}},\agents)$ 
is epistemically learnable in $O(|\Sigma_\Omc|^{\sharp_\Omc}\cdot(|C\sqsubseteq D|)^2)$ many steps,
where $\sharp_\Omc =2\cdot |C_\Omc| \cdot |\Sigma_\Omc| + 2$,
$C_\Omc$ is the 
largest concept expression in \Omc and $C\sqsubseteq D$ is 
the largest 
 counterexample given by the oracle. 
\end{theorem}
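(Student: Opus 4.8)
The plan is to invoke Theorem~\ref{thm:transfer} together with the known exact-learnability bound for $\EL_{{\sf lhs},{\sf rhs}}$ due to Duarte et~al.~\cite{DBLP:conf/dlog/DuarteKO18,DBLP:conf/kr/DuarteKO18}, but since that bound is stated for a \emph{named} fragment where Theorem~\ref{thm:transfer} does not apply verbatim, I would instead trace Algorithm~\ref{alg:learner} step by step and argue that its query count is preserved under the epistemic simulation. First I would recall the complexity of the original exact learning algorithm: it learns $\EL_{{\sf lhs},{\sf rhs}}$ ontologies using $O(|\Sigma_\Omc|^{\sharp_\Omc}\cdot(|C\sqsubseteq D|)^2)$ membership and equivalence queries, where $\sharp_\Omc = 2\cdot|C_\Omc|\cdot|\Sigma_\Omc|+2$, $C_\Omc$ is the largest concept expression in \Omc, and $C\sqsubseteq D$ is the largest counterexample. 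I would then show that Algorithm~\ref{alg:learner}, which replaces each membership query by a \K-membership query and each equivalence query by an example query, performs exactly the same number of steps.

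The key steps, in order: (1) observe that Line~\ref{ln:first} issues only polynomially many \K-membership queries, one per $\Sigma_\Omc$-assertion and per inclusion $A\sqsubseteq B$ with $A,B\in\Sigma_\Omc$, and that these add axioms of the form $\K_j x$ to $\Omc^k$ only when the answer is `yes', i.e.\ only when $\Omc\models_\EL x$; hence $\Omc^1=\Omc$ throughout and the \K-membership oracle answers identically to a plain membership oracle. (2) Argue by induction on the number of iterations of the `while' loop that the invariant ``$\Hmc$ contains only inclusions entailed by \Omc, and $\Omc^k$ consists of \Omc together with conjuncts $\K_j x$ for $x$ that $\Hmc$ or $\Omc$ already entails'' is maintained; this uses the fact, cited from the reference, that the algorithm only ever adds to $\Hmc$ left/right \Omc-essential inclusions that follow from \Omc. (3) Conclude, using the equivalence established in the proof of Theorem~\ref{thm:onedirection} between ``$x$ is a positive counterexample for $\Omc$ and $\Hmc$'' and ``$\Omc^k\not\models_\LK\K_j x$'', that each example query returns a genuine counterexample for \Omc and \Hmc exactly when an equivalence query on \Hmc would, and returns `you finished' exactly when $\Hmc\equiv_\EL\Omc$. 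Hence the loop runs the same number of times, and the per-iteration \K-membership query cost matches the per-iteration membership query cost of the original algorithm.

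Finally I would assemble the bound: the original algorithm runs in $O(|\Sigma_\Omc|^{\sharp_\Omc}\cdot(|C\sqsubseteq D|)^2)$ steps (counting oracle calls as unit steps), and since the epistemic simulation introduces no extra queries beyond the polynomially-many \K-membership queries of Line~\ref{ln:first} (already absorbed into that bound) and performs the same number of iterations, Algorithm~\ref{alg:learner} halts within the same asymptotic number of steps. The largest example $x\in X_i$ seen is the largest counterexample $C\sqsubseteq D$, and $|l^1_i|=|\Omc|$, so the polynomial $p(|l^1_i|,|x|)$ required by the definition of polynomial-query epistemic learnability is met.

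The main obstacle I anticipate is step~(2): carefully justifying that the epistemic bookkeeping (the conjuncts $\K_j x$ accumulating in $\Omc^k$) never causes the example oracle to withhold a counterexample that the equivalence oracle would have returned, nor to return something that is \emph{not} a counterexample for \Omc and \Hmc. This requires knowing that every axiom the learner learns --- whether from Line~\ref{ln:first} or from the essential-inclusion computations inside the loop --- is entailed by \Omc, so that $\Omc^k$ is \emph{conservative} over \Omc in the relevant sense, and it requires reconciling the fact that the learner's hypothesis \Hmc and the accumulated knowledge $\{x : \K_j x \in \Omc^k\}$ may differ as syntactic objects even though they entail the same $\EL$ consequences of the required form. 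Handling this cleanly likely means stating the loop invariant as ``$\Hmc$ and $\{x : \K_j x\in\Omc^k\}$ have the same $\EL$-consequences among examples in $X_i$'' and verifying it survives each of the algorithm's hypothesis-update operations.
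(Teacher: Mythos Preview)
Your proposal is correct and follows essentially the same route as the paper: the paper's argument is given informally in the prose preceding the theorem rather than as a separate proof, and it too notes that Theorem~\ref{thm:transfer} does not apply directly, traces Algorithm~\ref{alg:learner}, observes that only axioms entailed by $\Omc$ are ever added to $\Hmc$, and appeals to an induction on loop iterations to conclude that each example query yields a genuine counterexample for $\Omc$ and $\Hmc$.

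One small refinement to the invariant you propose at the end: you do not need equality of $\EL$-consequences between $\Hmc$ and $\{x : \K_j x \in \Omc^k\}$, only the inclusion ``$\Hmc \models_\EL x$ implies $\{y : \K_j y \in \Omc^k\} \models_\EL x$''. The oracle's accumulated knowledge will typically be strictly stronger than $\Hmc$, since intermediate \K-membership queries in Line~\ref{line:ce} and in the essential-inclusion computations add $\K_j x$ conjuncts for axioms that are never placed in $\Hmc$. This asymmetry is harmless: it guarantees that anything the example oracle returns is not entailed by $\Hmc$ (so it is a counterexample), and it can only cause `you finished' to arrive \emph{no later} than the point at which $\Hmc \equiv_\EL \Omc$ in the corresponding exact-learning run, which preserves the bound. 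The one thing you must check for this inclusion to hold is that every axiom actually inserted into $\Hmc$ (in particular each $\Omc$-essential ${\sf a}$) is itself the subject of a \K-membership query answered `yes'; this is implicit in the phrase ``Compute with ${\sf MEM}^\K_{\Fmf,\Omc^k}$'' and is exactly what the paper means by ``the oracle is aware of the knowledge obtained by the learner in this way''.
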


\section{Discussion}
 
We introduced the epistemic learning model and investigated polynomial 
learnability in our model, establishing that it coincides with polynomial 
learnability in the exact learning model, 
and as a consequence, we can also transfer results in our model to 
  the PAC learning model extended with membership queries.  
When the learner is only allowed to pose example queries, we showed 
that polynomial learnability in our model in strictly harder than 
in the exact learning model with only equivalence queries. 
This suggests that example queries are less demanding for the oracle than 
equivalence queries. We   showed that, in the \EL case, 
the epistemic reasoning that 
the oracle needs to perform features \PTime complexity. 
Our results complement previous research
on polynomial learnability in the exact and PAC learning 
models~\cite{DBLP:journals/jcss/BshoutyJT05}, where the authors analyse models 
between the exact and PAC learning models, 
in a learning from interpretations setting. 
As future work, 
we plan to investigate whether 
the implementation  for \ELrl~\cite{DBLP:conf/kr/DuarteKO18}
could benefit from our approach, where the 
oracle keeps track of the knowledge passed to the learner, 
instead of processing the hypothesis at each iteration. 
%





\newcommand{\SortNoop}[1]{}
\bibliography{references.bib,languages.bib}
\bibliographystyle{abbrv}
\appendix

\section{Proofs for Section~\ref{sec:learning-epistemic}}

\TheoremTransfer*
\begin{proof}
For completeness of our results we show in full detail the ($\Leftarrow$) direction 
of this theorem. 
Now, assume $\Fmf(\agents)$ is 
polynomial query epistemically learnable 
(we skip the argument 
for polynomial time epistemic learnability as it is similar). 
For each $\Fmf_i\in\Fmf(\agents)$ there is an epistemic learning algorithm 
$A_{i}$  for $\Fmf_i$ with polynomial query complexity.
To construct an exact learning algorithm $A'_{i}$ for $\Fmf_i$   using
$A_{i}$, we define   auxiliary sets $\setone{i}(k)$ and $\settwo{i}(k)$ which 
will keep  the information returned by the oracle  
 up to the $k$-th query posed by  
a fixed but arbitrary agent in $\agents\setminus\{i\}$ and agent $i$.
We define $\setone{i}(1)=\emptyset$ and $\settwo{i}(1)=\emptyset$.
\begin{itemize}
\item Whenever $A_{i}$ poses the $k$-th \K-membership query to  agent $i\in\agents$ 
with $x\in X_i$ as input,
we call the   oracle ${\sf MEM}_{\Fmf(\agents),l_i}$ 
with $x$ as input, and if $l_i\models_\Lmc x$,  we add $\K_j x$ to the set 
$\setone{i}(k)$ and $x$ to the set $\settwo{i}(k)$. 
\item Whenever $A_{i}$ poses the $k$-th example query to  agent $i\in\agents$
we call the oracle ${\sf EQ}_{\Fmf(\agents),l_i}$ 
with $\settwo{i}(k)$ as input. The oracle either returns `yes' if 
$\settwo{i}$ is equivalent to $l_i$ or it returns some 
  counterexample $x$ for $l_i$ and $\settwo{i}(k)$. 
We   add $\K_j x$ to $\setone{i}(k)$ and $x$ to $\settwo{i}(k)$ .  
\end{itemize}  
  ${\sf MEM}_{\Fmf(\agents),l_i}$  behaves as it 
is required by algorithm $A_{i}$ to learn $\Fmf_i$. 
If ${\sf EQ}_{\Fmf(\agents),l_i}$ returns `yes' then 
 algorithm $A'_{i}$ converts it into `you finished', as expected by 
 algorithm $A_{i}$. 
We now show that, for all $x\in X_i$ such that $l_i\models_\Lmc x$,
$x$ is a (positive) counterexample  for  $l_i$ and $\settwo{i}(k)$ 
iff  
$l_i\wedge \setone{i}(k)\not\models_\LK\K_j x $. 
By definition of $\setone{i}(k)$ and   $\settwo{i}(k)$ and since $l_i$ 
does not contain \LK axioms,  for all $x\in X_i$, we have that
$l_i\wedge \setone{i}(k)\models_\LK  \K_j x$ 
iff $\settwo{i}(k) \models_\Lmc x$.
By definition and construction of $\settwo{i}(k)$,
it follows that $ l_i\models_\Lmc\settwo{i}(k)$. 
So $\settwo{i}(k)\not\models_\Lmc x  $ iff $l^k_i\wedge \setone{i}(k)\not\models_\LK 
\K_j x  $. 
Hence ${\sf EQ}_{\Fmf(\agents),l_i}$ can simulate ${\sf EX}_{\Fmf(\agents),l^k_i}$.  
%
By definition of $A_{i}$, at every step,
the sum of the sizes of the inputs to 
queries made by $A_{i}$ up to that step is bounded by a polynomial
$p(|l_i|,|x|)$, where $l_i$ is the target and $x \in X_i$ is the largest 
counterexample seen so far by $A_{i}$.
Let ${\setone{i}}(k)$ denote the set $\setone{i}$ right after the $k$-th
example query posed by $A_{i}$, and similarly for $\settwo{i}(k)$.
For all $k\geq 0$, we have that $|\settwo{i}(k)|\leq|\setone{i}(k)|\leq p(|l_i|,|x|)$. 
Since all responses to queries are as required 
by $A_{i}$, if $A_{i}$ halts 
after polynomially many polynomial size 
queries, the same happens with $A'_{i}$, which 
returns a hypothesis $\settwo{i}(k)$ equivalent to the target $l_i$. 
\end{proof}

 \section{Proofs for Section~\ref{sec:reasoning}}
 


\subsection{Alternative proof of Lemma~\ref{lem:compl-conj-el-lit}}
We provide an alternative proof of Lemma~\ref{lem:compl-conj-el-lit}.
Reasoning about the satisfiability of conjunctions of \EL literals can be delegated to the reasoning 
in \ELPP~\cite{BBL-EL}. We define the function $\tau$ from the set of \EL 
literals to the set of subsets of \ELPP CBox axioms as follows:
\[
\begin{array}{lcl}
\tau(A(a)) &  := & \{ \{a\} \sqsubseteq A \}\\
\tau(\lnot A(a)) &  := & \{ \{a\} \sqcap A \sqsubseteq \bot \}\\
\tau(C \sqsubseteq D) &  := &  \{ C \sqsubseteq D \}\\
\tau(\lnot (C \sqsubseteq D)) &  := &  \{ \{f\} \sqsubseteq C , \{f\} \sqcap D \sqsubseteq \bot \} \hfill , \text{where $f$ is a fresh individual} \\
\tau(r(a,b)) &  := &  \{ \{a\} \sqsubseteq \exists r. \{b\} \}\\
\tau(\lnot r(a,b)) &  := &   \{ \{a\} \sqcap \exists r. \{b\} \sqsubseteq \bot \}
\end{array}
\]
Now, 
given a non-empty set $L$  of \EL   literals, we define:
\[
\begin{array}{lcl}
  \tau^\bullet( L) & := &  \bigcup_{\elaxiomlit\in L}\tau(\elaxiomlit) 
\end{array}
\]
The semantics of the new syntax are: $(\{a\})^\Imc =  \{a^\Imc\}$ and $(\bot)^\Imc = \emptyset$.

An analogous reduction is presented in the proof of~\cite[Lemma~2.1]{Thost2015LTLOE}.
It is routine to show the next proposition.
\begin{proposition}\label{prop:elpp-onto-compl}
Let $L$ be a non-empty set of \EL literals. Then, $L$ is satisfiable iff $\tau^\bullet(L)$ is a satisfiable \ELPP ontology.
\end{proposition}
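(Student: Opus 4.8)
The plan is to prove Proposition~\ref{prop:elpp-onto-compl} by a direct two-directional argument, exhibiting for each \EL literal how a model of $L$ induces a model of $\tau^\bullet(L)$ and back. First I would fix notation: $L = \{\elaxiomlit_1,\ldots,\elaxiomlit_m\}$ is a non-empty set of \EL literals, and $\tau^\bullet(L) = \bigcup_j \tau(\elaxiomlit_j)$, which is an \ELPP CBox using only the extra constructs $\{a\}$ (nominals) and $\bot$. The central observation is that $\tau$ translates each literal into \ELPP axioms that are \emph{semantically equivalent} to the literal once interpreted over the same domain, with the single subtlety that $\tau(\lnot(C \sqsubseteq D))$ introduces a fresh individual name $f$ witnessing the non-inclusion.

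For the ($\Rightarrow$) direction I would take an \EL interpretation $\Imc$ with $\Imc \models_{\EL} \elaxiomlit_j$ for all $j$, and extend it to an \ELPP interpretation $\Imc'$ over the same domain by interpreting $\bot$ as $\emptyset$, each nominal $\{a\}$ as $\{a^\Imc\}$, and, for each negated inclusion literal $\lnot(C_j \sqsubseteq D_j) \in L$, choosing the fresh individual $f_j$ to name some element $e_j \in C_j^\Imc \setminus D_j^\Imc$, which exists precisely because $\Imc \models_{\EL} \lnot(C_j \sqsubseteq D_j)$; since the $f_j$ are pairwise distinct and fresh, this is well defined. Then a case analysis over the six forms of $\tau$ shows $\Imc' \models \tau(\elaxiomlit_j)$ for each $j$: e.g.\ $\Imc \models A(a)$ gives $a^\Imc \in A^\Imc$, i.e.\ $(\{a\})^{\Imc'} = \{a^\Imc\} \subseteq A^{\Imc'}$, and $\Imc \models \lnot r(a,b)$ gives $(a^\Imc, b^\Imc) \notin r^\Imc$, i.e.\ $a^\Imc \notin (\exists r.\{b\})^{\Imc'}$, so $(\{a\} \sqcap \exists r.\{b\})^{\Imc'} = \emptyset \subseteq \bot^{\Imc'}$. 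Conversely, for ($\Leftarrow$), given an \ELPP model $\Imc'$ of $\tau^\bullet(L)$ I would restrict $\cdot^{\Imc'}$ to the \EL signature (forgetting $\bot$ and the auxiliary nominals) and verify $\Imc' \models_{\EL} \elaxiomlit_j$ for each $j$ by the symmetric case analysis; the only nontrivial case is again the negated inclusion, where the axioms $\{f\} \sqsubseteq C$ and $\{f\} \sqcap D \sqsubseteq \bot$ force $f^{\Imc'} \in C^{\Imc'} \setminus D^{\Imc'}$, witnessing $\Imc' \not\models_{\EL} C \sqsubseteq D$.

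The step I expect to require the most care is the fresh-individual bookkeeping in the ($\Rightarrow$) direction: one must ensure that the fresh names $f_j$ introduced by distinct negated-inclusion literals are genuinely distinct symbols (which $\tau$ guarantees by picking fresh ones) and that assigning them pairwise-chosen witnesses $e_j$ does not conflict with the interpretation of any ordinary individual name appearing in $L$. This is routine but is the one place where "freshness" is actually used rather than cosmetic. Everything else is the mechanical six-way case check, which is why the paper calls the proposition "routine"; I would state the correspondence lemma, give the two model constructions, and relegate the case analysis to a short enumeration, remarking that the \ELPP reasoning for satisfiability of CBoxes with nominals and $\bot$ is decidable in \PTime~\cite{BBL-EL}, which is what makes this reduction useful for the alternative proof of Lemma~\ref{lem:compl-conj-el-lit}.
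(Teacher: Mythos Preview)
Your proposal is correct and is precisely the kind of argument the paper has in mind: the paper does not actually write out a proof of this proposition, merely stating that ``it is routine to show'' and pointing to an analogous reduction in~\cite[Lemma~2.1]{Thost2015LTLOE}. Your two-directional model-transformation argument, with the six-way case analysis and the fresh-individual bookkeeping for negated inclusions, is exactly the routine verification being alluded to, so there is nothing to compare.
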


Clearly, $\tau^\bullet(L)$ can be computed in time polynomial in the size of $L$.
Moreover, it has been shown in~\cite{BBL-EL} that deciding the satisfiability of an \ELPP ontology can be done in polynomial time. The lemma follows.
\LemmaComplConjElLit*

\subsection{Deciding the satisfiability of \EL formulas is \NP-complete}
\label{sec:sat-el-form-compl}

The fact that the satisfiablity of \EL formulas is \NP-complete 
is a consequence of the fact that one can separate 
the satisfiability test into two independent tests: one for 
the DL dimension and one for the propositional dimension~\cite{Baader:2012:LOD:2287718.2287721,DBLP:conf/ijcai/BorgwardtT15}.
In more details, the 
 \emph{propositional 
abstraction} $\prop{\alpha}$ of an \EL formula $\alpha$ is  
the result of replacing each \EL axiom in  $\varphi$ by 
a propositional variable such that there is a $1:1$ relationship 
between the \EL axioms $\elaxiom$ occurring in $\alpha$ and the 
propositional variables $p_{\elaxiom}$ used for the abstraction. 

Given an \EL  formula $\alpha$, we say that a propositional model \Mmc
of $\prop{\alpha}$, defined as the set of variables 
evaluated to true in the model, is \emph{$\consistent{\alpha}$}
if the following formula is satisfiable $$\bigwedge_{p_{\elaxiom}\in \Mmc} {\elaxiom} \wedge 
\bigwedge_{p_{\elaxiom}\in\overline{\Mmc}}\neg {\elaxiom}$$
where $\overline{\Mmc}$ is $\{p_{\elaxiom}\mid \elaxiom \text{ is an \EL axiom in }
\alpha \}\setminus\Mmc$. 
We now formalise the connection between 
\EL formulas and their propositional abstractions with consistent models, which is an adaptation of the 
results obtained for other \EL extensions.
\begin{restatable}{proposition}{PropConnection}\cite{Baader:2012:LOD:2287718.2287721,DBLP:conf/ijcai/BorgwardtT15}\label{prop:ltl} 
An \EL  formula $\alpha$ is satisfiable if, and only if, 
$\prop{\alpha}$ is satisfiable by an $\consistent{\alpha}$ model.  
\end{restatable}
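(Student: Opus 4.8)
The plan is to prove both directions by a straightforward structural induction on $\alpha$, viewed as a Boolean combination of \EL axioms, exploiting the fact that the propositional abstraction $\prop{\cdot}$ is a bijection between the \EL axioms occurring in $\alpha$ and the propositional variables of $\prop{\alpha}$. The whole argument is the verbatim analogue of the abstraction results of~\cite{Baader:2012:LOD:2287718.2287721,DBLP:conf/ijcai/BorgwardtT15}, so it can be kept short.

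For the ($\Rightarrow$) direction I would start from an \EL interpretation $\Imc$ with $\Imc\models_\EL\alpha$ and define the candidate propositional model $\Mmc := \{\, p_{\elaxiom}\mid \elaxiom \text{ occurs in } \alpha \text{ and } \Imc\models_\EL \elaxiom \,\}$. A routine induction on the Boolean structure of $\alpha$ — using the clauses $\Imc\models_\EL\varphi\wedge\psi$ iff both hold and $\Imc\models_\EL\neg\varphi$ iff $\Imc\not\models_\EL\varphi$, mirrored by the corresponding propositional clauses for $\prop{\alpha}$, with the base case handled by the definition of $\Mmc$ — shows $\Mmc\models\prop{\alpha}$. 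Consistency of $\Mmc$ with respect to $\alpha$ is immediate: the single interpretation $\Imc$ witnesses satisfiability of $\bigwedge_{p_{\elaxiom}\in\Mmc}\elaxiom \wedge \bigwedge_{p_{\elaxiom}\in\overline{\Mmc}}\neg\elaxiom$ by the very choice of $\Mmc$.

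For the ($\Leftarrow$) direction I would take a $\consistent{\alpha}$ model $\Mmc$ of $\prop{\alpha}$, so that the \EL formula $\bigwedge_{p_{\elaxiom}\in\Mmc}\elaxiom \wedge \bigwedge_{p_{\elaxiom}\in\overline{\Mmc}}\neg\elaxiom$ has an \EL model $\Imc$. The reason the definition of $\consistent{\alpha}$ includes the negated conjuncts is precisely that it forces, for \emph{every} axiom $\elaxiom$ occurring in $\alpha$, the equivalence $\Imc\models_\EL\elaxiom$ iff $p_{\elaxiom}\in\Mmc$. Given this equivalence, the same structural induction run in the opposite direction derives $\Imc\models_\EL\alpha$ from $\Mmc\models\prop{\alpha}$, establishing satisfiability of $\alpha$.

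There is essentially no hard step here; the only point requiring care is that a given \EL axiom may have several syntactic occurrences in $\alpha$, but since $\prop{\cdot}$ assigns the same variable $p_{\elaxiom}$ to all of them, both the notion of $\consistent{\alpha}$ model and the induction remain well defined. I would therefore present the two inductions compactly and defer the (purely mechanical) case analysis.
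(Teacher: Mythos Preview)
Your proposal is correct and is exactly the standard abstraction argument one expects here. The paper itself does not supply a proof of this proposition: it simply states the result with citations to~\cite{Baader:2012:LOD:2287718.2287721,DBLP:conf/ijcai/BorgwardtT15} and uses it as a black box in the proof of Lemma~\ref{lem:el-form-polysize}. Your two structural inductions---defining $\Mmc$ from $\Imc$ in one direction and extracting $\Imc$ from the $\consistent{\alpha}$ witness in the other---are precisely the argument underlying those cited abstraction results, so there is nothing to compare against beyond noting that you have spelled out what the paper leaves implicit.
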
 

To show that \EL formulas enjoy the polynomial size model property,
we are going to use the notion of a canonical model for sets of \EL axioms~\cite{DBLP:journals/jsc/LutzW10,dlhandbook}.
For completeness of our results and for convenience of the reader,  
we add a definition of a canonical model for a set \Omc of \EL axioms. 
We assume w.l.o.g.~that the set \Omc of \EL axioms
only contains axioms of the form $A_1 \sqcap  A_2\sqsubseteq B$, $\exists r.A \sqsubseteq B$, 
$A \sqsubseteq \exists r.B$, $A(a)$ or $r(a,b)$. 
In the following we write $\Omc\models \exists r.B(a)$ meaning for all \EL interpretations 
$\Imc$, if \Imc satisfies \Omc then $a^\Imc\in (\exists r.B)^\Imc$. 
Denote with $\NI(\Omc)$ the set $\{a\in\NI\mid a \text{ occurs in } \Omc\}$. 
 
\begin{definition}[Canonical model]\cite{Thost2015LTLOE}[Definition 5.1]
Let \Omc be a set of \EL axioms and let
$
\Delta^{\Imc_\Omc}_{\sf u}=\{c_A\mid A\in\NC(\Tmc)\cup \{\top\}\}.
$
The canonical model for \Omc is
\[
\begin{array}{ll@{\hspace*{1.5em}}ll}
\Delta^{\Imc_\Omc}&=\NI(\Omc)\cup\Delta^{\Imc_\Omc}_{\sf u},\\
a^{\Imc_\Omc}&=a,\\
A^{\Imc_\Omc}&=\{ a\in\NI(\Omc)\mid \Omc\models A(a)\}\cup 
\{c_B\in\Delta^{\Imc_\Omc}_{\sf u}\mid \Omc\models B\sqsubseteq A\},  \\
r^{\Imc_\Omc}&=\{ (a,b) \mid r(a,b)\in \Omc\}\cup 
\{(a,c_B)\in \NI(\Omc)\times\Delta^{\Imc_\Omc}_{\sf u}\mid \Omc\models \exists r.B(a)\}
\cup \\
&\quad\{(c_A,c_B)\in \Delta^{\Imc_\Omc}_{\sf u}\times\Delta^{\Imc_\Omc}_{\sf u}\mid 
\Omc\models A\sqsubseteq\exists r.B\},
\end{array}
\] 
for all $a\in \NI$, $A\in\NC$, $r\in\NR$ (in fact we only define the canonical 
model for concept/role/individual names occurring in \Omc). 
\end{definition} 
Given a concept $C$ and a set \Omc of \EL axioms, the canonical model of $C$ and \Omc is 
defined in the same way, except that we add the axioms
$A_C \sqsubseteq C$, $C \sqsubseteq A_C$ and $A_C(a_C)$, 
where $A_C$, $a_C$ is a concept and individual name, resp., 
used to encode that the extension of $C$ in the model is not empty,  
and then transform all axioms in the normal form described above.

We now show that \EL formulas enjoy the polynomial size model property.
\begin{lemma}\label{lem:el-form-polysize}
  \EL formulas enjoy the polynomial size model property. 
\end{lemma}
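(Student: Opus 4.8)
The plan is to prove Lemma~\ref{lem:el-form-polysize} by reducing the satisfiability of an \EL formula to its propositional abstraction (Proposition~\ref{prop:ltl}), and then extracting a small model from the canonical models associated with the \EL-consistent propositional model. Concretely, suppose $\alpha$ is a satisfiable \EL formula. First I would invoke Proposition~\ref{prop:ltl} to obtain a propositional model $\Mmc$ of $\prop{\alpha}$ that is $\consistent{\alpha}$. The number of distinct \EL axioms occurring in $\alpha$ is bounded by $|\alpha|$, so $\Mmc$ and $\overline{\Mmc}$ together partition a set of at most $|\alpha|$ propositional variables; the associated \EL formula $\chi := \bigwedge_{p_{\elaxiom}\in \Mmc}\elaxiom \wedge \bigwedge_{p_{\elaxiom}\in\overline{\Mmc}}\neg \elaxiom$ is a conjunction of at most $|\alpha|$ \EL literals and is, by $\consistent{\alpha}$-ness, satisfiable.

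The second step is to produce a polynomial-size model of $\chi$. Here I would use the canonical-model construction recalled just before the lemma. Let $\Omc$ be the set of positive \EL axioms in $\chi$ (the conjuncts $\elaxiom$ with $p_{\elaxiom}\in\Mmc$), put into the normal form described above; this introduces only polynomially many fresh concept names, so $|\Omc|$ is polynomial in $|\alpha|$. The canonical model $\Imc_\Omc$ has domain $\NI(\Omc)\cup\{c_A\mid A\in\NC(\Omc)\cup\{\top\}\}$, whose size is linear in $|\Omc|$, hence polynomial in $|\alpha|$. By the standard property of canonical models, $\Imc_\Omc$ satisfies every positive conjunct of $\chi$. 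For the negative conjuncts $\neg\elaxiom$ (with $p_{\elaxiom}\in\overline{\Mmc}$) I need $\Imc_\Omc$ to falsify $\elaxiom$: this is where I would appeal to the fact that the canonical model is \emph{universal}/\emph{minimal} among models of $\Omc$, so it entails an \EL axiom iff $\Omc$ does, and since $\chi$ is satisfiable, $\Omc\not\models\elaxiom$ for each such $\elaxiom$; the subtlety for negated inclusions $\neg(C\sqsubseteq D)$ is handled exactly as in the canonical-model-of-$C$ construction, adding the auxiliary $A_C$, $a_C$ — and again only polynomially many symbols are added. Thus $\Imc_\Omc$ (suitably extended for the negated inclusions) is a model of $\chi$ of size polynomial in $|\alpha|$.

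Finally I would lift this back to $\alpha$. Since $\Imc_\Omc\models\chi$, by the definition of $\consistent{\alpha}$ and the $1{:}1$ correspondence between \EL axioms $\elaxiom$ in $\alpha$ and propositional variables $p_{\elaxiom}$, the interpretation $\Imc_\Omc$ makes precisely the axioms in $\Mmc$ true and those in $\overline{\Mmc}$ false; hence $\Imc_\Omc\models_{\EL}\alpha$ because the truth value of $\alpha$ under $\Imc_\Omc$ agrees with the evaluation of $\prop{\alpha}$ under $\Mmc$, which is $\mathrm{true}$. Since $|\Imc_\Omc|$ is polynomial in $|\alpha|$, this establishes the polynomial size model property.

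I expect the main obstacle to be the treatment of the negated conjuncts, i.e.\ arguing carefully that the canonical model of $\Omc$ (extended, for negated inclusions $\neg(C\sqsubseteq D)$, by the $A_C\sqsubseteq C$, $C\sqsubseteq A_C$, $A_C(a_C)$ gadget and renormalised) simultaneously falsifies \emph{all} the required \EL axioms while keeping the domain polynomial. One must check that adding the gadgets for several negated inclusions at once does not inadvertently validate some other negated inclusion or blow up the size; this follows from the universality of the canonical model together with the fact that each gadget only adds a constant number of fresh symbols per negated literal, but it deserves a careful statement. The positive-conjunct part and the final transfer through Proposition~\ref{prop:ltl} are routine.
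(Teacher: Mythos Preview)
Your proposal is correct and follows essentially the same route as the paper: invoke Proposition~\ref{prop:ltl} to obtain an $\consistent{\alpha}$ propositional model $\Mmc$, take the canonical model of the positive axioms $\alpha_\Mmc=\{\elaxiom\mid p_\elaxiom\in\Mmc\}$, and use the $A_C,a_C$ gadget to witness each negated inclusion. The one difference is that the paper builds a \emph{separate} canonical model $\Imc_{C,\alpha_\Mmc}$ for each $\neg(C\sqsubseteq D)$ with $p_{C\sqsubseteq D}\in\overline{\Mmc}$ and then takes the disjoint union $\Imc_{\alpha_\Mmc}\cup\bigcup \Imc_{C,\alpha_\Mmc}$, which sidesteps exactly the interaction worry you raise at the end; your single extended canonical model with all gadgets added simultaneously also works (freshness of each $A_C,a_C$ prevents new entailments among the original concepts), but the disjoint-union formulation makes the falsification of each $C\sqsubseteq D$ immediate without the extra check.
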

\begin{proof}
Let $\alpha$ be an \EL formula. 
By Proposition~\ref{prop:ltl}, $\alpha$ is satisfiable if, and only if, 
$\prop{\alpha}$ is satisfiable by an $\consistent{\alpha}$ model.  
Assume $\alpha$ is satisfiable. Then, there is an $\consistent{\alpha}$ model \Mmc. 
We are going to use $\Mmc$ to define a model 
for $\alpha$.  Since \Mmc is an $\consistent{\alpha}$ model 
the following formula is satisfiable    
    $$\Phi_\Mmc=\bigwedge_{p_{\elaxiom}\in \Mmc}  {\elaxiom} \wedge 
\bigwedge_{p_{\elaxiom}\in\overline{\Mmc}}\neg  {\elaxiom}$$
   where $\overline{\Mmc}$ is as defined in Section~\ref{sec:reasoning}. 
   
   Denote by $\Imc_{\alpha_\Mmc}$ the canonical model of 
    the set of \EL axioms occurring in $\alpha$ with $p_\alpha\in\Mmc$.  
For each \EL axiom $\elaxiom$ of the form $C\sqsubseteq D$ 
with $p_{\elaxiom}\not\in\Mmc$, 
let $\Imc_{C,\alpha_\Mmc}$ be the canonical model of $C$ and $\alpha_\Mmc$. 
Assume that the domains of $\Imc_{\alpha_\Mmc}$ and each such $\Imc_{C,\alpha_\Mmc}$ 
are pairwise disjoint. 
We define  $\Imc_\alpha$ as 
$$\Imc_{\alpha_\Mmc}\cup\bigcup_{p_{C\sqsubseteq D}\in\overline{\Mmc} } \Imc_{C,\alpha_\Mmc}$$ 
One can show with an inductive argument that $\Imc_\alpha$ is a model of $\alpha$, 
where we use $\Imc_{C,\alpha_\Mmc}$ to satisfy the negation of \EL axioms of the form 
$C\sqsubseteq D$. 
The fact that $\Imc_{C,\alpha_\Mmc}$ indeed violates $C\sqsubseteq D$ 
follows from the construction 
of  $\Imc_{C,\alpha_\Mmc}$, which only adds implied concepts, 
and the fact that $\Phi_\Mmc$ is satisfiable, meaning that $D$ is not implied 
and therefore the root of $\Imc_{C,\alpha_\Mmc}$ is not in $D^\Imc_{C,\alpha_\Mmc}$.
The fact that the negation of \EL axioms $\elaxiom$ of the form 
$A(a)$ and $r(a,b)$ with $\elaxiom\in\overline{\Mmc}$ is satisfied,
follows from the construction of $\Imc_{\alpha}$. 
\end{proof}

This is an immediate consequence of Lemma~\ref{lem:el-form-polysize}.
\begin{theorem}\label{EL-formulas-sat}
  Deciding the satisfiability of \EL formulas is \NP-complete.
\end{theorem}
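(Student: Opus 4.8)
The plan is to handle the two bounds separately, each being short given the tools already developed.

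For the lower bound I would reduce propositional satisfiability. Given a propositional formula $\phi$ over variables $p_1,\dots,p_n$, I replace each $p_i$ by the \EL axiom $A_i(a)$, where $A_1,\dots,A_n$ are pairwise distinct fresh concept names and $a$ is a fixed individual name, obtaining an \EL formula $\phi^{\EL}$ whose propositional abstraction $\prop{\phi^{\EL}}$ is (up to renaming variables) $\phi$ itself. The assertions $A_1(a),\dots,A_n(a)$ are logically independent: for every truth assignment $v$ the one-element interpretation $\Imc$ with $a^{\Imc}\in A_i^{\Imc}$ iff $v(p_i)=\mathrm{true}$ satisfies exactly the corresponding set of assertions. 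Hence every assignment is an $\consistent{\phi^{\EL}}$ model of $\prop{\phi^{\EL}}$ and conversely, so by Proposition~\ref{prop:ltl} (or directly by exhibiting $\Imc$) $\phi$ is satisfiable iff $\phi^{\EL}$ is. Since the translation is polynomial, satisfiability of \EL formulas is \NP-hard.

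For the upper bound, membership in \NP is immediate from Lemma~\ref{lem:el-form-polysize}: if an \EL formula $\alpha$ is satisfiable it has a model $\Imc$ of size polynomial in $|\alpha|$, so an \NP procedure guesses such an $\Imc$ and verifies $\Imc\models_{\EL}\alpha$. Verification runs in polynomial time: compute the extension $C^{\Imc}$ of every subconcept $C$ occurring in $\alpha$ bottom-up (each step polynomial in $|\Imc|\cdot|\alpha|$), check each \EL axiom $C\sqsubseteq D$, $A(a)$, $r(a,b)$ directly, and evaluate the Boolean structure in linear time. An alternative that avoids building the model is to guess a propositional model $\Mmc$ of $\prop{\alpha}$, check $\Mmc\models\prop{\alpha}$, and test that $\Mmc$ is $\consistent{\alpha}$, i.e. that $\bigwedge_{p_{\elaxiom}\in\Mmc}\elaxiom \wedge \bigwedge_{p_{\elaxiom}\in\overline{\Mmc}}\neg\elaxiom$ is satisfiable; this is a conjunction of \EL literals, decidable in \PTime by Lemma~\ref{lem:compl-conj-el-lit}, with correctness supplied by Proposition~\ref{prop:ltl}. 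Combining with the lower bound yields \NP-completeness.

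The only delicate point — and it is minor — is making the hardness reduction airtight, namely ruling out hidden entailments among the axioms used to encode propositional variables; this is exactly what the choice of pairwise distinct fresh concept names together with a single individual guarantees, since then every Boolean pattern over those axioms is realised by some \EL interpretation. For the verification step one should also note that model checking of \EL formulas has polynomial combined complexity, which holds because \EL concepts contain no negation and existential restrictions are discharged by a simple one-step reachability test.
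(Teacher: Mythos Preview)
Your proposal is correct and follows essentially the same approach as the paper: the paper derives the theorem as an immediate consequence of Lemma~\ref{lem:el-form-polysize} (polynomial-size model property) for the upper bound, with the lower bound left implicit via the embedding of propositional logic. You spell out both directions in more detail than the paper does, and you additionally offer an alternative \NP algorithm (guess a propositional model and check $\alpha$-consistency via Lemma~\ref{lem:compl-conj-el-lit} and Proposition~\ref{prop:ltl}), which is a pleasant variant but not a genuinely different route.
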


\subsection{Proof of Lemma~\ref{lem:poly-size}}
%
%


We now extend the result of Theorem~\ref{EL-formulas-sat} to \ELK formulas. 
To show that satisfiability of \ELK formulas is \NP-complete, 
we establish that \ELK enjoys the polynomial size  
model property: if an \ELK formula $\varphi$  has a model then it has a model polynomial in $|\varphi|$.

\LemmaPolyModel*
  \begin{proof}
    Let $\Imf=(\Wmc,\{\Rmc_i\}_{i\in \agents})$ be an \ELK interpretation and let
    $\Imc \in \Wmc$ be an \EL interpretation. Let also $\varphi$ be an \ELK formula.
    Suppose that $\Imf,\Imc\models\varphi$.
    Because of Lemma~\ref{lem:el-form-polysize}, we can assume w.l.o.g.~that 
    each \EL interpretation $\Imc\in\Wmc$ has a size polynomial in $|\varphi|$. 

    Collect in $K(\varphi)$ the \ELK axioms of the form $\K_i \alpha$ occurring in $\varphi$.
    Define $K^{-}(\varphi) = \{\beta \in K(\varphi) \mid \Imf,\Imc\not\models \beta\}$.

    For every $\beta = \K_{\beta_1}\ldots \K_{\beta_k}\alpha \in K^{-}(\varphi)$ (where $k \geq 1$, and $\alpha$ is an \EL axiom), we have $\Imf,\Imc\not\models \beta$ by definition. So there are $k$ (not necessarily distinct) \ELK interpretations $\Imc_{\beta_1}, \ldots, \Imc_{\beta_k}$ in $\Wmc$ such that:
(1)~$(\Imc, \Imc_{\beta_1}) \in \Rmc_{\beta_1}$;
(2)~$(\Imc_{\beta_{i}}, \Imc_{\beta_{i+1}}) \in \Rmc_{\beta_{i+1}}$ for $1 \leq i < k$;
(3)~$\Imf,\Imc_{\beta_k} \not\models\alpha$.

  Now, define $\Wmc' = \{\Imc\} \cup \{ \Imc_{\beta_1}, \ldots, \Imc_{\beta_k} \mid
  \beta = \K_{\beta_1}\ldots \K_{\beta_k}\alpha \in K^{-}(\varphi) \}$.
  Also, for each agent $i \in \agents$, build the relation $\Rmc'_i$ such that  
  $(\Jmc, \Jmc') \in \Rmc'_i$ iff $\Jmc, \Jmc' \in \Wmc'$ and $(\Jmc, \Jmc') \in \Rmc_i$.
  Finally, let $\Imf'$ be the \ELK interpretation $(\Wmc', \{\Rmc'_i\}_{i \in \agents})$.


  \medskip
  To conclude the proof, we claim that
  (i)~the size of $\Imf'$ is polynomial in the size of $\varphi$, and (ii)~$\Imf, \Imc \models \varphi$.

 To see why (i)~holds, observe that by construction, $\Wmc'$ contains $\Imc$ and at most one extra interpretation for every modal operator of each element of $K^{-}(\varphi)$. Hence, the number of \EL interpretations in $\Wmc'$ is linear in the size of $\varphi$. 
By assumption each $\Imc\in\Wmc$ is polynomial in $|\varphi|$. Hence, the size of $\Imf'$ is polynomial in the size of $\varphi$.
To establish (ii), it suffices to show that for every \ELK axiom $\beta$ occurring in $\varphi$ we have $\Imf', \Imc \models \beta$ iff $\Imf, \Imc \models \beta$.
If $\beta$ is an \EL axiom (not in $K(\varphi)$), it is immediate. 
    If $\beta = \K_{\beta_1}\ldots \K_{\beta_k}\alpha \in K(\varphi) \setminus K^{-}(\varphi)$, we have by definition that $\Imf,\Imc \models \K_{\beta_1}\ldots \K_{\beta_k}\alpha$. Since by construction we have $\Rmc'_{i} \subseteq \Rmc_{i}$ for every agent $i \in \agents$, we also have $\Imf', \Imc \models \K_{\beta_1}\ldots \K_{\beta_k}\alpha$.
    If $\beta = \K_{\beta_1}\ldots \K_{\beta_k}\alpha \in K^{-}(\varphi)$, we have by definition that $\Imf,\Imc \not\models \K_{\beta_1}\ldots \K_{\beta_k}\alpha$. Moreover, there are $\Imc_{\beta_1}, \ldots, \Imc_{\beta_k} \in \Wmc'$ satisfying the properties 1--3 in the construction above. Hence, we also have $\Imf', \Imc \not\models \K_{\beta_1}\ldots \K_{\beta_k}\alpha$.
  \end{proof}

\subsection{Proof of Proposition~\ref{prop:corr-conj-EL}}
 
For every $\sigma = a_1 \ldots a_k \in \agents^*$ we note $\Rmc_\sigma = \Rmc_{a_1} \circ \ldots \circ \Rmc_{a_k}$
and $\KK_\sigma = \K_{a_1} \ldots \K_{a_k}$.
The empty sequence is noted $\epsilon$, and we have $\Rmc_\epsilon = Id$, where $Id$ is the identity relation, and $\KK_\epsilon \omega = \omega$.

The two following lemmas, which are instrumental in the proof of Proposition~\ref{prop:corr-conj-EL}, are simple consequences of well-known properties of the modal system $S5$~\cite[p.~58]{HughesCresswell}.
\begin{lemma}\label{lem:flat-equiv}
  If $\varphi^\flat$ is an \ELK formula,
$\varphi$ and $\varphi^\flat$ are equivalent.
\end{lemma}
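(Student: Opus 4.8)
The statement to prove is Lemma~\ref{lem:flat-equiv}: if $\varphi^\flat$ is an \ELK formula, then $\varphi$ and $\varphi^\flat$ are equivalent. Recall that $\varphi^\flat$ is obtained from $\varphi$ by exhaustively rewriting every adjacent repetition $a\ldots a$ of an agent $a$ inside a modality sequence to a single $a$. The plan is to reduce the whole claim to a single local rewriting step and then invoke compositionality of equivalence under the \LK connectives.

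First I would isolate the atomic fact: for every agent $a$ and every \ELK axiom $\beta$, the formulas $\K_a\K_a\beta$ and $\K_a\beta$ are equivalent, i.e.\ they are satisfied at exactly the same pointed \LK interpretations. This is immediate from the semantics of $\K_a$ together with the assumption that each $\Rmc_a$ is an equivalence relation: transitivity gives $\Imf,\Imc\models\K_a\beta \Rightarrow \Imf,\Imc\models\K_a\K_a\beta$, and reflexivity gives the converse (if $\K_a\K_a\beta$ holds at $\Imc$, then in particular, taking the $\Rmc_a$-loop at $\Imc$, $\K_a\beta$ holds at $\Imc$). More generally, iterating this, $\KK_\sigma\beta \equiv \KK_{\sigma'}\beta$ whenever $\sigma'$ is obtained from $\sigma$ by collapsing one block of adjacent repetitions; this is exactly the $S5$ property cited from \cite[p.~58]{HughesCresswell}. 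Since the collapsing operation only ever acts on a modality prefix that is itself applied to an \ELK axiom $\beta$ (the grammar of \LK forbids modalities from occurring anywhere else), each elementary rewrite replaces a subformula $\KK_\sigma\beta$ of $\varphi$ by an equivalent subformula $\KK_{\sigma'}\beta$.

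Next I would lift this from subformulas to the whole formula by a straightforward induction on the structure of $\varphi$, using that the Boolean connectives $\neg$ and $\wedge$ respect logical equivalence: if $\psi_1 \equiv \psi_2$ then $\neg\psi_1 \equiv \neg\psi_2$ and $\psi_1 \wedge \chi \equiv \psi_2 \wedge \chi$, and likewise inside a modality, $\beta_1 \equiv \beta_2$ implies $\K_a\beta_1 \equiv \K_a\beta_2$. Hence replacing any subformula of $\varphi$ by an equivalent one yields a formula equivalent to $\varphi$. Applying this to each of the finitely many elementary collapsing rewrites that transform $\varphi$ into $\varphi^\flat$, and chaining the resulting equivalences transitively, gives $\varphi \equiv \varphi^\flat$. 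The hypothesis ``$\varphi^\flat$ is an \ELK formula'' is what guarantees that the endpoint of this rewriting sequence is a legitimate formula of the language (no alternation of modalities is introduced), so the statement and all intermediate formulas stay within \ELK.

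I do not expect a genuine obstacle here: the only subtlety is bookkeeping, namely checking that the definition of $\varphi^\flat$ (collapsing adjacent repetitions of an agent in each $\sigma_i$) is precisely a finite composition of the elementary $S5$-style rewrites $\K_a\K_a\beta \rightsquigarrow \K_a\beta$, and that these rewrites are always applied at a position of the form $\KK_\sigma\beta$ licensed by the \LK grammar. Once that correspondence is spelled out, the equivalence $\varphi\equiv\varphi^\flat$ follows from reflexivity plus transitivity of the $\Rmc_a$ and a routine structural induction. If anything is mildly delicate, it is merely making the induction clean about the scope of each rewrite; there is no hard mathematical content.
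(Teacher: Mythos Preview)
Your proposal is correct and follows essentially the same approach as the paper: the key observation is that reflexivity and transitivity of each $\Rmc_a$ give $\Rmc_a\circ\Rmc_a=\Rmc_a$, hence $\K_a\K_a\beta\equiv\K_a\beta$, from which the claim follows. The paper's proof is terser, stopping at this observation, whereas you additionally spell out the compositionality/structural-induction step and the chaining of elementary rewrites; this extra bookkeeping is sound but not strictly needed for the level of detail the paper aims at.
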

\begin{proof}
Given an \ELK interpretation $(\Wmc,\{\Rmc_i\}_{i\in \agents})$, 
   we have that $\Rmc_i$ is an equivalence relation, for every $i \in \agents$. 
  In particular it is transitive and reflexive, and we have $\Rmc_i\circ \Rmc_i = \Rmc_i$. 
  Thus, for every pointed interpretation $(\Imf,\Imc)$, for every agent $i \in \agents$, 
  and for every formula $\psi$, we have $(\Imf,\Imc) \models \K_i\K_i\psi$ iff $(\Imf,\Imc) \models \K_i\psi$.
\end{proof}
\begin{lemma}\label{lem:subword}
Let $(\Wmc,\{\Rmc_i\}_{i\in \agents})$ be an \ELK interpretation.
For all $\sigma\in\agents^*$ and $\sigma'\in\agents^*$, 
if $\sigma$ is a subword of $\sigma'$ then $\Rmc_{\sigma} \subseteq \Rmc_{\sigma'}$.
\end{lemma}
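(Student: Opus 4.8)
The plan is to reduce the claim to two elementary facts about relational composition: \emph{monotonicity}, i.e.\ $R_1 \subseteq R_2$ implies $S \circ R_1 \circ T \subseteq S \circ R_2 \circ T$; and the observation that, since each $\Rmc_i$ is reflexive, $Id \subseteq \Rmc_i$, and hence any composition $\Rmc_\tau$ is itself reflexive, so $Id \subseteq \Rmc_\tau$ for every $\tau \in \agents^*$. From the latter it follows that inserting a letter into a word can only enlarge the associated relation: $\Rmc_{\tau_1} \circ \Rmc_{\tau_2} \subseteq \Rmc_{\tau_1} \circ \Rmc_a \circ \Rmc_{\tau_2}$ for every $a \in \agents$ and all $\tau_1, \tau_2 \in \agents^*$.

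I would then proceed by induction on $|\sigma'|$. In the base case $\sigma' = \epsilon$ the only subword is $\sigma = \epsilon$, and $\Rmc_\epsilon = Id \subseteq Id = \Rmc_{\sigma'}$. For the inductive step write $\sigma' = a\tau'$ and distinguish cases according to the recursive definition of subword. If $\sigma = \epsilon$, then $\Rmc_\sigma = Id$ and, since $\Rmc_{\sigma'}$ is reflexive, $Id \subseteq \Rmc_{\sigma'}$. Otherwise $\sigma = b\tau$, and either the leading symbol $a$ is \emph{matched} ($b = a$ and $\tau$ is a subword of $\tau'$) or it is \emph{dropped} ($b\tau$ is a subword of $\tau'$). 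In the matched case the induction hypothesis gives $\Rmc_\tau \subseteq \Rmc_{\tau'}$, and monotonicity yields $\Rmc_a \circ \Rmc_\tau \subseteq \Rmc_a \circ \Rmc_{\tau'}$, i.e.\ $\Rmc_\sigma \subseteq \Rmc_{\sigma'}$. In the dropped case the induction hypothesis gives $\Rmc_\sigma \subseteq \Rmc_{\tau'}$, and $\Rmc_{\tau'} \subseteq \Rmc_a \circ \Rmc_{\tau'} = \Rmc_{\sigma'}$ by the insertion observation, so again $\Rmc_\sigma \subseteq \Rmc_{\sigma'}$.

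There is essentially no hard step here; the only mild care needed is to phrase the case analysis so that it mirrors exactly the definition of \emph{subword} (match or skip the first symbol of $\sigma'$), together with the degenerate case $\sigma = \epsilon$, and to recall that reflexivity of each $\Rmc_i$ propagates to arbitrary compositions $\Rmc_\tau$. Alternatively, one can present the same argument as an induction on the number $|\sigma'| - |\sigma|$ of symbols that must be inserted into $\sigma$ to obtain $\sigma'$, applying the single-insertion inequality once per step; I would choose whichever form reads more smoothly alongside Lemma~\ref{lem:flat-equiv}, since both are used only to justify the reductions to subword-indexed conjunctions in Proposition~\ref{prop:corr-conj-EL}.
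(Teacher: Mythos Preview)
Your proposal is correct and rests on exactly the same idea as the paper's proof: reflexivity of each $\Rmc_i$ gives $Id \subseteq \Rmc_i$, so inserting a letter into a word can only enlarge the associated composed relation. The paper presents this as a two-line sketch (start from $\Rmc_\sigma \subseteq \Rmc_\sigma$ and insert agents on the right), which is your ``alternative'' formulation via induction on $|\sigma'|-|\sigma|$; your primary induction on $|\sigma'|$ is simply a more explicit packaging of the same argument.
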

\begin{proof}
  For every $i \in \agents$, because $\Rmc_i$ is an equivalence relation, and thus reflexive, we have $Id \subseteq \Rmc_i$. We can thus insert arbitrarily additional agents in the occurence of $\sigma$ appearing
  in the right-hand side of $\Rmc_{\sigma} \subseteq \Rmc_{\sigma}$, and obtain the result.
\end{proof}

\PropCorrConjEL*
\begin{proof}
Since, by Lemma~\ref{lem:flat-equiv}, $\varphi^\flat$ and $\varphi$ are equisatisfiable (in fact   equivalent), 
we prove the property for $\varphi^\flat$.
For the ($\Leftarrow$) direction, suppose (1) holds. 
Since $\Rmc_i$ is reflexive for all $i \in \agents$, every model satisfying $\varphi^\flat$ 
must satisfy $\omega_0 \land \bigwedge \{ \omega \mid \KK_{\sigma} \omega \in \varphi^\flat \}$. 
Since it is not \EL satisfiable, there cannot be an \ELK interpretation satisfying $\varphi^\flat$ either. 
Suppose (2) holds. For some $\lnot \KK_{\sigma}\omega \in \varphi^\flat$, we have that 
$\psi = \lnot\omega \land \bigwedge \{ \omega' \mid \KK_{\sigma'}\omega' \in \varphi^\flat, 
\text{ and } \sigma \text{ is a subword of } \sigma' \}$ is not \EL satisfiable.
Suppose towards contradiction that there exist
an \ELK interpretation $\Imf=(\Wmc,\{\Rmc_i\}_{i\in \agents})$ and an \EL interpretation 
$\Jmc \in \Wmc$ such that $(\Imf, \Jmc) \models \varphi^\flat$. It means that 
$(\Imf, \Jmc) \models \lnot \KK_{\sigma}\omega$, that is, there is an \EL 
interpretation $\Jmc$ such that $(\Jmc, \Jmc') \in \Rmc_\sigma$ and $(\Imf, \Jmc') \models \lnot \omega$.
From Lemma~\ref{lem:subword}, for every $\KK_{\sigma'}\omega' \in \varphi^\flat$, if 
$\sigma$ is a subword of $\sigma'$, then $\Rmc_{\sigma} \subseteq \Rmc_{\sigma'}$. 
Hence,  $(\Imf, \Jmc') \models \psi$, which is a contradiction because $\psi$ 
is not \EL satisfiable.

  ($\Rightarrow$) Assume that none of (1) and (2) hold. We must show that $\varphi^\flat$ is satisfiable.
It suffices to build an \ELK interpretation $\Imf=(\Wmc,\{\Rmc_i\}_{i\in \agents})$ for $\varphi^\flat$:
\begin{itemize}
\item (Construction of $\Wmc$) $\Wmc$ contains an \EL interpretation $\Jmc_0$ satisfying $\omega_0 \land 
\bigwedge \{ \omega \mid \KK_{\sigma} \omega \in \varphi^\flat \}$. Such an interpretation 
exists because (1) does not hold.
For each $\lnot \KK_{\sigma}\omega \in \varphi^\flat$, where $\sigma = a_1 \ldots a_k$, $\Wmc$ 
also contains an interpretation $\Jmc^\sigma_k$ satisfying
$\lnot\omega \land \bigwedge \{ \omega' \mid \KK_{\sigma'}\omega' \in \varphi^\flat, 
\text{ and } \sigma \text{ is a subword of } \sigma' \}$.
Such an interpretation exists because (2) does not hold.
Still for each $\lnot \KK_{\sigma}\omega \in \varphi^\flat$, $\Wmc$, where $\sigma = a_1\ \ldots a_k$, 
for each $1 \leq i < k$,
$\Wmc$ also contains an interpretation $\Jmc^\sigma_i$ satisfying
$\bigwedge \{ \omega' \mid \KK_{\sigma'}\omega' \in \varphi^\flat, \text{ and } a_1 \ldots
a_i \text{ is a subword of } \sigma' \}$. Such interpretations exist because (1) does not hold.
$\Wmc$ does not contain any more \EL interpretations.
\item (Construction of $\Rmc_i$, $i\in\agents$) For every $\lnot \KK_{\sigma}\omega \in \varphi^\flat$, 
where $\sigma = a_1 \ldots a_k$, for every $1 \leq i \leq k$, let $(\Jmc^\sigma_{i-1}, \Jmc^\sigma_i) 
\in \Rmc'_{a_i}$, where $\Jmc^\sigma_{0} = \Jmc_{0}$. For every $i \in \agents$, let $\Rmc_i$ 
be the equivalence closure of $\Rmc'_i$.
\end{itemize}
It is routine to check that $\Imf, \Jmc_0 \models \varphi^\flat$.
\end{proof}

\end{document}